\documentclass[runningheads]{llncs}
\usepackage[T1]{fontenc}
\usepackage{graphicx}
\usepackage{booktabs}
\usepackage[misc]{ifsym}
\usepackage{amsfonts}	
\usepackage{amsmath}
\usepackage{duckuments}						
\usepackage[linesnumbered,ruled,vlined,noresetcount]{algorithm2e}
\usepackage{caption}
\usepackage{subcaption}
\usepackage{colortbl}
\usepackage{xcolor}

\usepackage{hyperref}
\usepackage{graphicx}
\usepackage{multirow}

\SetKwInput{KwInput}{Input}                
\SetKwInput{KwOutput}{Output}
\usepackage{balance}
\definecolor{tabred}{RGB}{214,39,40}
\definecolor{tabblue}{RGB}{31,119,180}
\definecolor{blue}{RGB}{0,0,156}
\definecolor{ForestGreen}{RGB}{0,128,0}
\definecolor{cyan8}{RGB}{48, 195, 209}  
\definecolor{gray8}{RGB}{130, 130, 130} 
\definecolor{olive8}{RGB}{151, 163, 68}  
\definecolor{purple8}{RGB}{160, 120, 190} 
\definecolor{deepyellow}{RGB}{240, 180, 40} 

\newcommand{\corr}{(\Letter)}

\usepackage{mwe}

\begin{document}

\title{Cross-attentive Cohesive Subgraph Embedding to Mitigate Oversquashing in GNNs}

\titlerunning{Cross-attentive Cohesive Subgraph Embedding}



\author{Tanvir Hossain\inst{1} \corr \and
Muhammad Ifte Khairul Islam\inst{1} \and Lilia Chebbah\inst{1} \and 
Charles Fanning\inst{2} \and Esra Akbas\inst{1}
}



\authorrunning{Hossain et al.
}



\institute{Department of Computer Science, Georgia State University, Atlanta, GA 30302, USA \and Department of Data Science and Analytics, Kennesaw State
University, 1000 Chastain Road, Kennesaw, GA 30144, USA \\
\email{\{thossain5, mislam29, lchebbah1\}@student.gsu.edu, cfannin8@students.kennesaw.edu, eakbas1@gsu.edu}
}


\maketitle              
\newcommand{\cacose}{\texttt{CaCoSE}}

\begin{abstract}
Graph neural networks (GNNs) have achieved strong performance across various real-world domains. Nevertheless, they suffer from oversquashing, where long-range information is distorted as it is compressed through limited message-passing pathways. This bottleneck limits their ability to capture essential global context and decreases their performance, particularly in dense and heterophilic regions of graphs. To address this issue, we propose a novel graph learning framework that enriches node embeddings via cross-attentive cohesive subgraph representations to mitigate the impact of excessive long-range dependencies. This framework enhances the node representation by emphasizing cohesive structure in long-range information but removing noisy or irrelevant connections. Additionally, It preserves essential global context without overloading the narrow bottlenecked channels, thereby enhancing message aggregation across vertices. Extensive experiments on multiple benchmark datasets demonstrate that our model achieves consistent improvements in classification accuracy over standard baseline methods.

\keywords{Oversquashing \and Graph Decomposition \and Subgraph Pooling \and Cross-attention \and Graph Representation Learning}
\end{abstract}

\section{Introduction}

Oversquashing~\cite{alon2020bottleneck} is a critical limitation of GNNs that limits their applications in diverse graph domains. It occurs when GNNs capture local signals effectively but  struggle to model long-range dependencies. For growing size of the $n$-hop neighborhood with respect to path distance leads to excessive message propagation through limited edges and narrow channels. Therefore, meaningful multi-hop information becomes bottlenecked, compressed or lost, thereby leading to oversquashing problem of GNNs. 

Many studies have been proposed to understand and quantify oversquashing using node sensitivity~\cite{topping2021understanding}, effective resistance~\cite{black2023understanding}, and commute time~\cite{di2023over}. The most common approach to alleviate this issue is rewiring, adding edges to reduce bottlenecks, but often causes high complexity~\cite{karhadkar2022fosr,topping2021understanding,nguyen2023revisiting,chen2022redundancy}. Moreover, the majority of these methods rely on computationally intensive techniques, such as spectral decomposition \cite{karhadkar2022fosr} and optimal transport \cite{nguyen2023revisiting}.

Real-world networks often preserve critical information within tightly connected dense regions. In presence of weak ties among regions, uniform message aggregation across these neighborhoods causes oversquashing. Dense regions dominate message passing, forcing long-range dependencies through narrow bottlenecks. Graph Decomposition offers a potential remedy by isolating structurally significant regions. However, the nondeterministic nature of traditional algorithms \cite{blondel2008fast,karypis1997metis} often results in suboptimal graph partitions. To address this issue, cohesion-aware graph decomposition operates hierarchically, preserving subgraphs' semantic concerning the original network. Besides, these algorithms can be executed in parallel~\cite{chen2024parallel,chu2022hierarchical}, causing negligible decomposition overhead. 

Furthermore, applying GNNs to these subgraphs allows capturing considerable task-relevant information for vertices in downstream graph analytics. We conduct a pilot study (in section \ref{susbsec:pilot_study}) that illustrates cohesion-sensitive graph partition shortens long-range dependencies while preserving homophily within subgraphs. Furthermore, pooling over these subgraphs enhances homophilic characteristics. Preserving homophily facilitates effective representation learning in graph-based models \cite{gu2023homophily}. Nonetheless, focusing solely on the densely connected regions may compromise long-range connectivity. Cross-attention~\cite{vaswani2017attention} addresses this limitation by enabling interactions between distant nodes, connecting relevant representations even when they are positionally unaligned. 


\noindent\textbf{Our Work.} We propose a novel graph learning framework, that utilizes \textbf{C}ross-\textbf{a}ttentive \textbf{Co}hesive \textbf{S}ubgraph  \textbf{E}mbeddings $(\cacose)$ to alleviate  oversquashing issue in GNNs. This $\cacose$ framework begins by a leveraging $k$-core~\cite{malliaros2020core} algorithm to compute the edges' cohesiveness scores based on their membership in $k$-level regions. Using these scores, it constructs edge-induced subgraphs guided by structural closure and feed them into GNNs to learn node representations. 

After getting nodes representations within each subgraph, it applies graph pooling, followed by cross-subgraph attention mechanism. By graph pooling, $\cacose$ selectively filters out noisy or irrelevant connections while preserving task-relevant structures and critical global information. Moreover, the attention module acts a learnable interfaces between vertices to capture their mutual relations that provides more expressivity in representations. 

Finally, the enriched subgraphs' representations are combined with the embeddings of nodes present in those subgraphs to capture long range global information. This enhances message aggregation and maintains essential local and global connectivity in representations, even in large-scale heterophilic networks. Throughout the paper, we refer to the $\cacose$ framework as our model for simplicity. The contributions of our model is as follows: 

\begin{itemize}
    \item The pilot study opens up a new viewpoint to understand the reasons for oversquashing prior to the experiment. Closure-guided $k$-core decomposition benefits cohesive awareness among vertices, which removes noisy pathways and provides essential locality for GNNs operations. Moreover, pooling enhances network's homophily to attain meaningful subgraph representations.
    \item The cross-subgraph attention  mechanism encodes essential global relations among all subgraphs. Merging node representations with enriched subgraph embeddings maintains both local and global connectivity, helping to alleviate the oversquahing problem in GNNs.
    \item In an extensive experiment on multiple datasets, $\cacose$ outperforms the standard baselines in both node \textbf{(NC)} and graph classification \textbf{(GC)} tasks.
\end{itemize}


\section{Methodology}\label{sec:methodology}
In this section, we first describe the primary components of $\cacose$. Then, we present a pilot study that motivates the design choices of our model. Finally, we provide details of $\cacose$ architecture and describe the representation learning procedure to address the oversquashing problem in GNNs. 

\subsection{Preliminaries}
\subsubsection{Oversquashing.}
One of the major drawbacks of GNNs that occurs when information is severely bottlenecked due to its failure in useful message passing. According to~\cite{topping2021understanding}, when a node $t$ that is connected with another node $s$ by an $r$-hop distance, the Jacobian operation $\delta{h_{t}^{(r+1)}/\delta x_{s}}$ denotes the change of the feature vector $x_{s}$’s impact on the $(r+1){st}$ layer's output of $h_{t}^{(r+1)}$. The quantification is observed from the absolute value $|\delta{h_{t}^{(r+1)}/\delta x_{s}}|$ of the Jacobian, where a negligible value indicates limited information propagation or oversquashing.
In addition, as presented in~\cite{black2023understanding}, oversquashing can be  associated with effective resistance between two node pairs. As low as the effective resistance is, the two nodes have more influence on each other during GNNs' operations.

\subsubsection{Problem Statement.}
Our study addresses the challenge of oversquashing by introducing a closure-aware cohesive subgraph decomposition framework along with graph pooling and a cross-subgraph attention mechanism. The primary objective is to enhance graph component features' expressivity in downstream analytics tasks. In a formal sense, a graph is denoted as $G = (V, E, X)$, where $V$ is the vertex set, $E$ is the edge set and $X \in \mathbb{R}^{N_{v} \times d}$ represents the initial feature vector of the vertices. The number of nodes is $|V| = N_{v}$ and $d$ denotes the dimension of the nodes' features. The graph will be partitioned into cohesion-centric subgraphs, such as $S = \{S_{1}, S_{2},..S_{k_{max}}\}$ where each $S_{k}\subseteq G$ corresponds to cohesion level $k = \{1,2,.., k_{max}\}$. We apply the proposed $\cacose$ model to these subgraphs. For node classification, the model trained on $D_{v} = ( G, \mathbb{X}, Y_v)$ to learn the mapping $f_{v}: \mathbb{X} \rightarrow Y_v$. Besides, for graph classification, it is trained on $D_{G} = (\mathbb{G}, Y_{G})$ to learn $f_G: \mathbb{G} \rightarrow Y_{G}$. In both cases, the learned functions leverage the reduced graph complexity achieved through the $\cacose$ framework. 

\subsubsection{Graph Neural Network.}
Graph neural networks (GNNs) embedded structural information in the graph via message propagation into node and graph representations. In graph convolution network (GCN), message propagation rule is defined as:
\begin{equation}
     H^{(l+1)} = \sigma(\tilde {D}^{-\frac{1}{2}}\tilde{A}\tilde {D}^{-\frac{1}{2}}H^{(l)}\Theta^{(l)})
    \label{eq:1}   
\end{equation}
where, $\tilde{A} = A +I_{N_v}$ denotes the adjacency matrix with the self-connections, $\tilde{D}_{ii} = \sum_{j}{\tilde{A}_{ij}}$ degree matrix and $\Theta \in \mathbb{R}^{d \times h}$ learnable weight matrix for $h$-dimensional embeddings. $\sigma(.)$ indicates the $ReLU(x) = max(0, x)$ activation. $H^{(l)} \in \mathbb{R}^{N_v \times h}$ denotes the $l^{th}$ layer’s nodes’ embeddings matrix for $h$-dimension where $H^{(0)} = X$. 

\subsubsection{Self-Attention Graph Pooling.} \label{subsubsec:SAGPool}
Graph pooling captures the entire graph's representation through compression. SAGPool~\cite{lee2019self} facilitates self-attentive graph learning through selecting impactful neighbors of vertices. It utilizes the (GCN) to measure the self-attention scores.
\begin{equation}
   Attn_{v} = \sigma(\tilde {D}^{-\frac{1}{2}}\tilde{A}\tilde {D}^{-\frac{1}{2}}X\Theta_{att}) 
    \label{eq:2} 
\end{equation}
Where $Attn_v \in \mathbb{R}^{N_v \times 1}$ is calculated from $\tilde {A} = A + I_N,~X$ and a learnable parameters' matrix $\Theta_{att}$.
Then utilizing the pooling ratio $(PR)$ it selects the top-$k$ node indices, $idx = topk(Attn_v, \lceil (PR)N_v \rceil)$. Next, only considers the top-$k$ vertices and their connections, $Attn_{mask} = Attn_{v}[{idx}]$. After the computation of $top$-k node indexes, the graph pooling of nodes’ features is measured as $X_{out} = X_{idx,:}. \odot Attn_{mask}$ and $A_{out} = A_{idx, idx}$. Finally, through a READOUT function the representation of the entire graph  is computed as $Z_{G} = READOUT{(X_{out})}$ . Here, READOUT is the global pooling function: sum, mean, or other advanced learnable aggregator.

\subsubsection{k-core Decomposition.}
Cohesive subgraph decomposition is instrumental to determine the intended graph regions. The decomposition is performed by iteratively peeling away nodes whose degrees fall below the specified threshold: nodes with degree less than $k$ are removed. 

In particular, if the graph has no isolated node, the original graph $G$ can be denoted as $1$-core. As indicated by hierarchy, $G_{k_{max}} \subseteq \dots G_{3} \subseteq G_{2} \subseteq G_{1}$, where, $k \in \{1,2, \dots k_{max}\}$. A $k$-core subgraph is defined as - 
\begin{definition}
    
\label{def:1}
(\textbf{$k$-core}): For a given $k \ge 1$, the $k$-core subgraph is represented from the graph $G$ when each node in the subgraph belongs to the same $(k)$ or more neighbors such that $|N(v)| \ge k$, where $N(v)$ represents the number of neighbors of node $v \in G_{k}$.
\end{definition}
\begin{figure*}[h!]
    \centering
    \begin{subfigure}{0.45\textwidth}
        \centering
        \includegraphics[width=\textwidth]{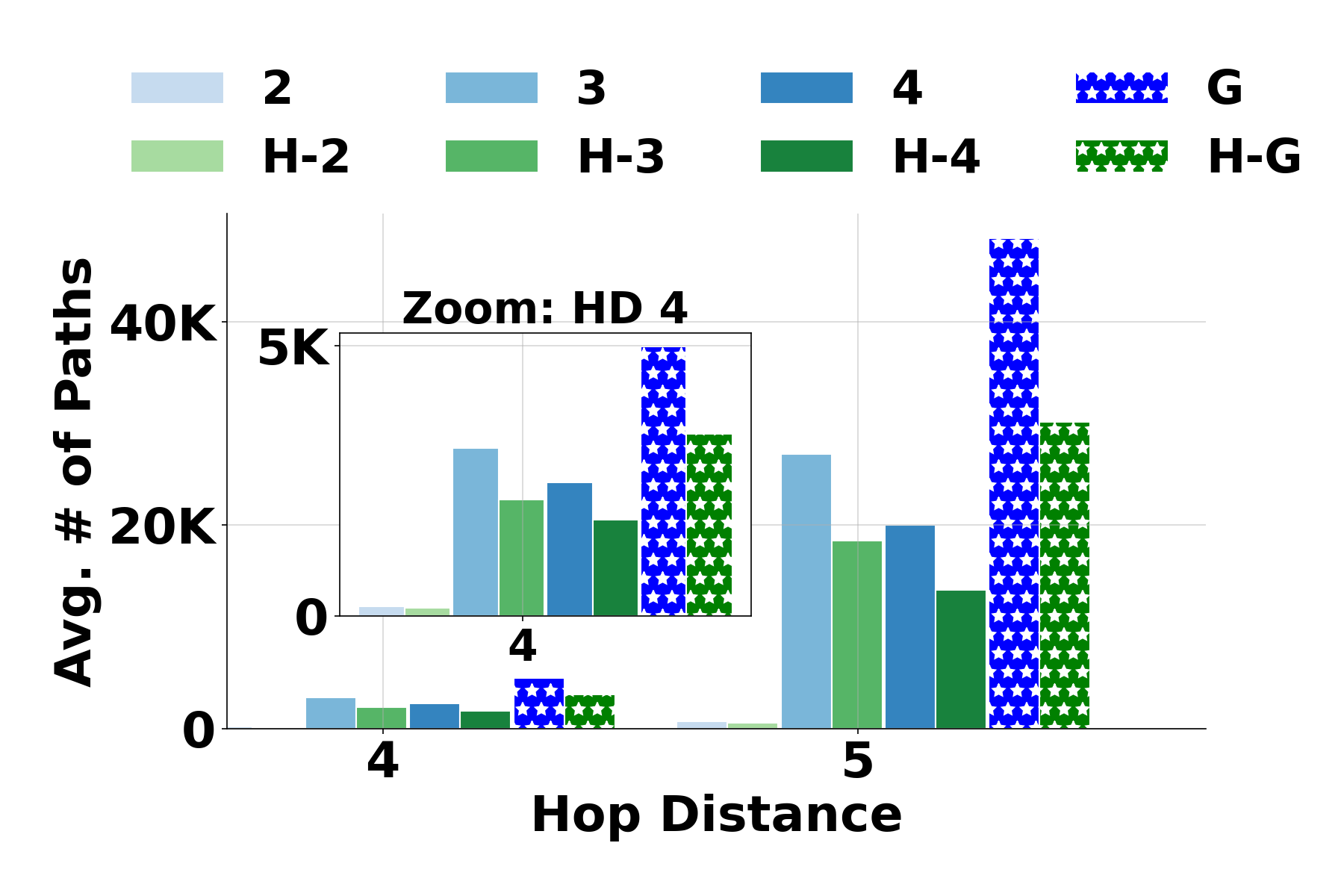}
        \caption{ANP Cora}
        \label{fig:cora-org-HD}
    \end{subfigure}%
    ~
    \begin{subfigure}{0.45\textwidth}
        \centering
        \includegraphics[width=\textwidth]{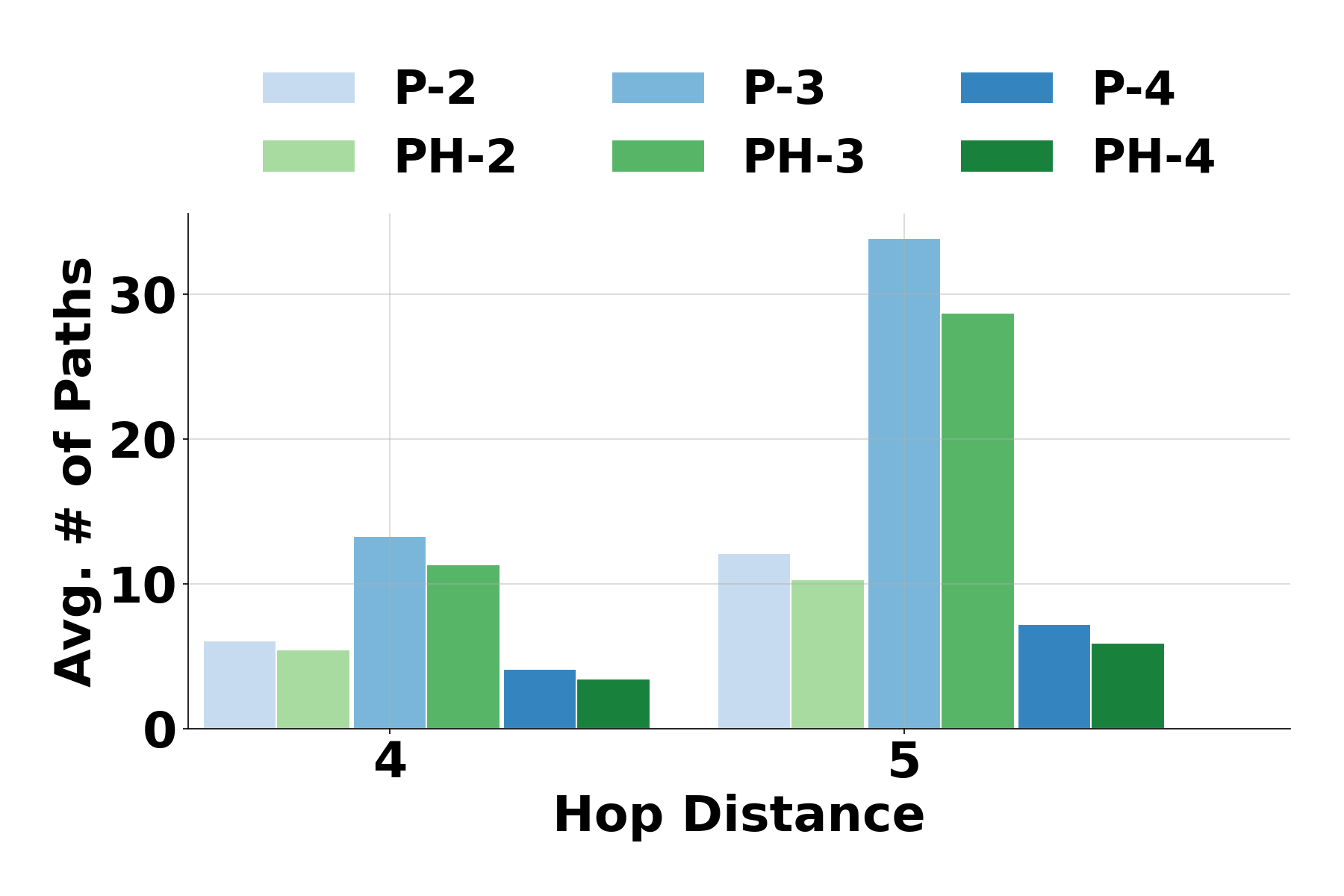}
        \caption{PS-ANP in Cora}
        \label{fig:cora-subg-HD}
    \end{subfigure}%
   
    \begin{subfigure}{0.45\textwidth}
        \centering
        \includegraphics[width=\textwidth]{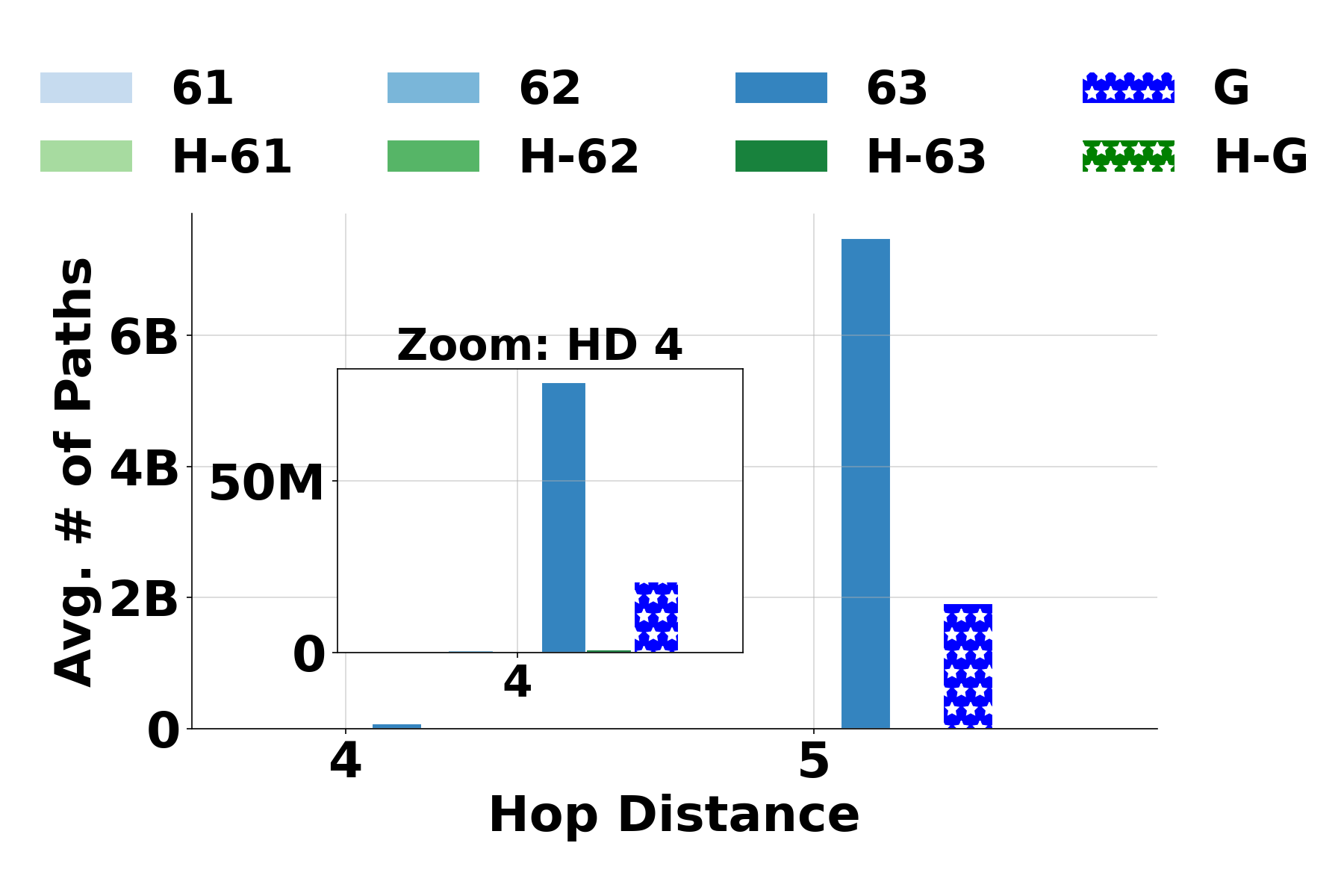}
        \caption{ANP in Chameleon}
        \label{fig:cham-org-HD}
    \end{subfigure}%
    ~
    \begin{subfigure}{0.45\textwidth}
        \centering
        \includegraphics[width=\textwidth]{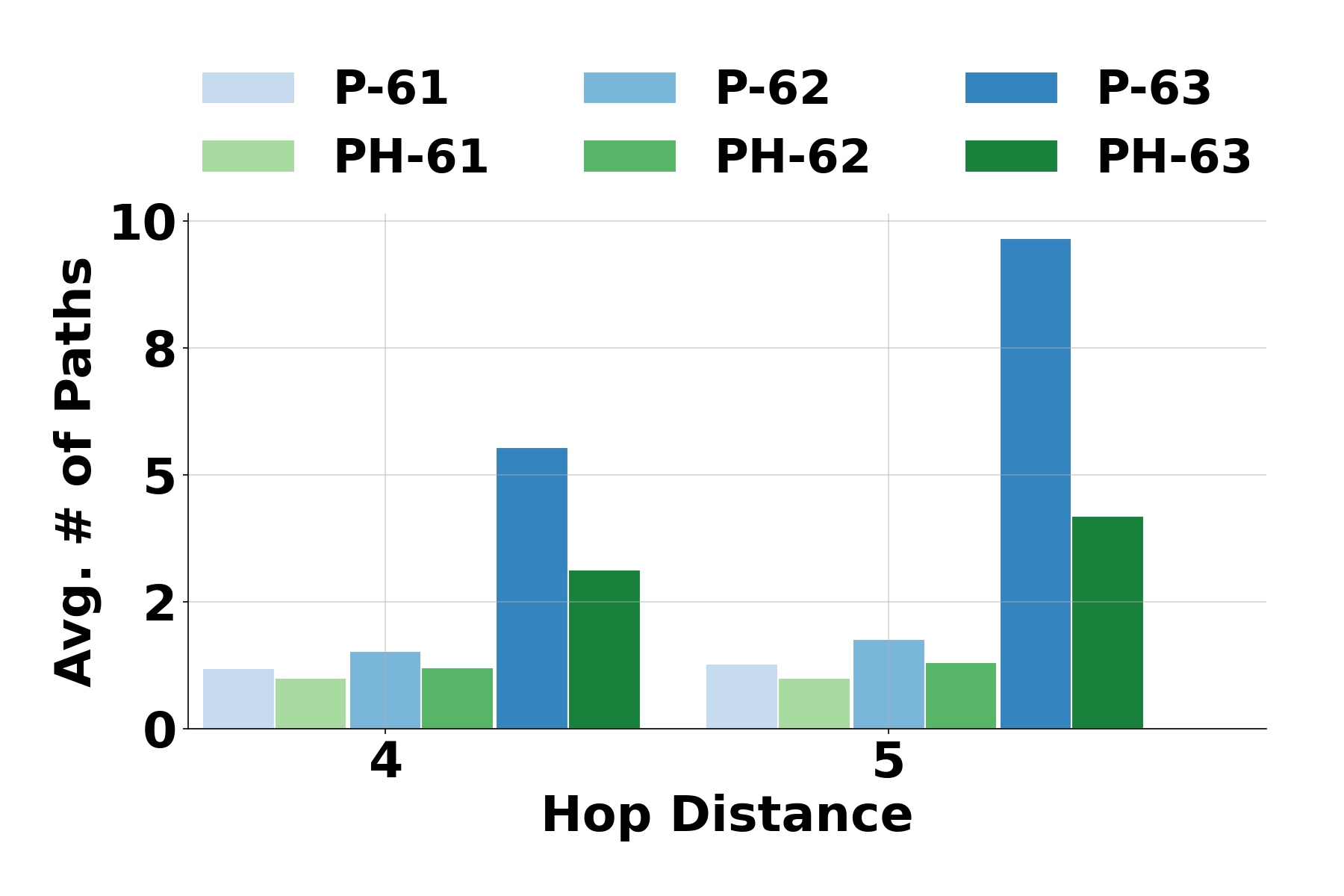}
        \caption{PS-ANP in Chameleon}
        \label{fig:cham-subg-HD}
    \end{subfigure}%
    
    \caption{\textbf{Pilot Study}. Average number $(\#)$ of paths (ANP) per node for $n \in \{4,5\}$ hop distances (HD) in Cora and Chameleon (Chm) datasets. Blue bars denote the original graph ($G$), its cores ($k$) and their (PS)-pooled subgraphs ($P-k$); green bars present their homophilic counterparts ($H-G$, $H-k$ and $PH-k$). From right to left, the deep (blue \& green) bar pairs (Figs.~\ref{fig:cora-org-HD} and~\ref{fig:cham-org-HD}) with hatch $('*')$ present the ANP of original graph and its homophilic subgraph respectively; in other cases, more deeper bar color denotes more denser subgraph. K - thousand and, M|B - m|billion.}
    \label{fig:oversquashing-paths}
    \vspace{-2mm}
\end{figure*}
\subsection{Pilot Study}
\label{susbsec:pilot_study}
Oversquashing occurs because GNNs struggle to learn long-range dependencies due to the bottleneck effects in graph structure~\cite{alon2020bottleneck}. Cohesion-aware graph decomposition not only reduces burdens on bottlenecked channels, but also preserves essential structural properties. However, in heterophilic networks, selecting task-relevant neighbors remains challenging due to cross-class mixing. This pilot study demonstrates that applying $k$-core decomposition, followed by SAGPooling, preserves essential class-consistency among vertices even in heterophilic graphs. That enables GNNs to aggregate more reliable signals, producing stable node representations. 

\begin{definition}(\textit{\textbf{Edge score}}):
In the context of \textbf{$k$-core}, for a graph $G = (V, E)$ and $k \ge 1$, an edge $(u, v) \in E$ can appear in multiple $k$-core subgraphs. The score of an edge is assigned as
\begin{align}
C(u, v) = max\{k~|~(u, v) \in G_{k}\}
\label{eq:3}
\end{align}
$C(u, v)$ is computed for the highest $k$-valued subgraph where $(u, v)$ exists. 
\label{def:2}
\end{definition}
Fig.~\ref{fig:oversquashing-paths} presents a pilot study on  Cora (homophilic) and Chameleon (heterophilic) datasets to examine how well subgraph decomposition preserves assortativity and disassortativity. According to definitions~\ref{def:1} and~\ref{def:2}, we first compute edge scores and extract edge-induced dense subgraphs for top-three $k-$core values. Form Cora $k \in \{2, 3, 4\}$ and from Chameleon $k \in \{61, 62, 63\}$. Next, we compute the average number of paths (ANP) for vertices at hop distances $n \in \{4, 5\}$ across original graphs, subgraphs and their homophilic counterparts $H = \{(u,v) \in E: y_u = y_v\}$. The measures are presented in bar graphs.

According to Figs.~\ref{fig:cora-org-HD} and~\ref{fig:cham-org-HD}, the subgraphs in both datasets retain the homophilic and heterophilic $(y_u \neq y_v)$ properties of their original graphs. While this preservation is often sufficient for homophilic networks to maintain task-relevant features, it becomes more challenging for heterophilic graphs like Chameleon. To extend the study, we apply SAGPool to each subgraph and repeat the ANP evaluation on pooled subgraphs (PS-ANP). Interestingly, in the Chameleon dataset (Fig.~\ref{fig:cham-subg-HD}), the pooled subgraphs exhibit a higher ratio of homophilic paths to the total average number of paths per node compared to the initial evaluation. That facilitates effective representation learning in graph models~\cite{gu2023homophily}.
Besides, it increases the ratio for the homophilic dataset. 

\subsection{Model Architecture} \label{subsec:model_architechture}
In Figure~\ref{fig:process_diagram}, the model architecture is comprised of four modules. \textcolor{gray8}{\textbf{Module (1)}} works as the graph pre-processor, while \textcolor{red}{\textbf{module (2)}} trains the decomposed subgraphs with GNNs. On the other hand, \textcolor{ForestGreen}{\textbf{module (3)}} functions as a feature post-processor, whereas \textcolor{deepyellow}{\textbf{module (4)}} operates for the final evaluation. 

\subsubsection{$(1)$ Cohesive Subgraph Decomposition.} Density-informed subgraphs, called cohesive subgraphs, are crucial for effective graph structure learning where each vertex gains sufficient connectivity in a particular region. Hence, learning the representation of vertices with these cohesive regions can capture adequate proximal insight for them. The $k$-core is one of the popular algorithms to obtain cohesion-focused subgraphs~\cite{chen2024breaking}. 

\begin{theorem} \label{def:3}
(\textit{\textbf{$CaEF:$ Closure-aware Edge Filtration}}):
Let $G_{k}$ denote the $k$-core of $G = (V, E)$.  For an edge $e =(u, v) \in G_{k}$, where $N(u)$ and $N(v)$ are the neighbor set of node $u$ and $v$, its triadic support is defined as   
\begin{equation}
S (u, v) = |N(u) \cap N(v)|
\label{eq:4}
\end{equation}
 For $k \ge \delta$, if $S(u, v) = 0$ then $(u, v)$ is removed from $G_{k}$ and reassigned to previous core $C (u, v) = (k-1)$, where $\delta$ denotes the edge filtering threshold. 
\end{theorem}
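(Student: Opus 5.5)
The ``theorem'' labelled CaEF is really a definition or procedure: it specifies a filtration rule rather than asserting a proposition. The content that can actually be proved is that the rule is \emph{well-defined} and \emph{consistent} with Definitions~\ref{def:1} and~\ref{def:2}. Concretely, I will show four things. First, $S(u,v)$ is computable for every $e\in G_k$. Second, reassigning $C(u,v)=k-1$ is legitimate. Third, the filtered subgraphs still form a nested hierarchy. Fourth, the filtration removes only edges that lie on no triangle, so every triangle survives.

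Well-definedness of the support and of the reassignment:
\begin{itemize}
\item For $e=(u,v)\in G_k$, the quantity $S(u,v)=|N(u)\cap N(v)|$ is a finite cardinality. This is immediate.
\item I will read $N(\cdot)$ as the neighbourhood in $G_k$, which is the natural choice.
\item To justify the reassignment, note that $G_k\subseteq G_{k-1}$ by the core hierarchy stated before Definition~\ref{def:1}. Hence any edge of $G_k$ also lies in $G_{k-1}$.
\item So setting $C(u,v)=k-1$ assigns the edge to a core that genuinely contains it. This is compatible with Definition~\ref{def:2}, read as a filtered maximum: $C(u,v)=\max\{k' : (u,v)\in G_{k'}\text{ and } (u,v) \text{ survives filtration at } k'\}$.
\item The one subtle point is cascading. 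After an edge is demoted from $k$ to $k-1$, it may again have zero support in $G_{k-1}$. If $k-1\ge\delta$, the rule applies again.
\item I would therefore formalise the rule as an iteration over $k$ from $k_{\max}$ down to $\delta$. The process terminates because $C$ strictly decreases and is bounded below by $\delta-1$.
\end{itemize}

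Nestedness of the filtered subgraphs. Define $\tilde G_k$ as the edge-induced subgraph on edges with filtered score at least $k$. Since scores only decrease, an edge with score at least $k$ also has score at least $k-1$, so $\tilde G_k\subseteq\tilde G_{k-1}$. The partition $S=\{S_1,\dots,S_{k_{\max}}\}$ is then obtained by grouping edges by their filtered score.

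Main obstacle: whether $\tilde G_k$ is still a $k$-core. It need not be, because removing an edge lowers the degrees of $u$ and $v$. I will not claim it. Instead I will state the weaker property that each filtered subgraph consists of triangle-supported edges within a $k$-core, which is the intended ``closure-aware'' guarantee. The argument for this is local: an edge with $S(u,v)\ge1$ lies on a triangle $u,v,w$ in $G_k$, and that triangle is unaffected by the removal of zero-support edges. Hence every triangle of $G_k$ persists in $\tilde G_k$.
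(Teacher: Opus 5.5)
Your checks are essentially right, but they establish different properties from the ones the paper proves about CaEF. The paper also treats CaEF as a rule. The only mathematics it attaches to it is Theorem~\ref{thm:ricci-curvature} in the appendix: the edges the filter removes are bottlenecks in the Ollivier--Ricci sense.

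The paper's proof is a direct coupling bound. Suppose $(u,v)\in E$ and $N(u)\cap N(v)=\emptyset$. Every coupling $\pi$ of the uniform neighbour measures $\mu_u,\mu_v$ is supported on $N(u)\times N(v)$, where $p\neq q$, so $d(p,q)\ge 1$ on the support. Hence $W_1(\mu_u,\mu_v)\ge 1$, and since $d(u,v)=1$, $\kappa(u,v)=1-W_1(\mu_u,\mu_v)\le 0$.

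This bound is what connects CaEF to oversquashing, and your proposal never touches curvature or bottlenecks. You name triangle persistence as the ``intended closure-aware guarantee''. The paper's guarantee is instead about the \emph{removed} edges: they are non-positively curved. Conversely, the paper never checks what you check:
\begin{itemize}
\item demotion lands in a core that contains the edge ($G_k\subseteq G_{k-1}$);
\item the superlevel sets $\tilde G_k$ stay nested;
\item triangles of $G_k$ survive.
\end{itemize}
So the two arguments are complementary: yours makes the decomposition $S$ well-posed, and the paper's explains why discarding zero-support edges is desirable. Adding the coupling bound, applied inside $G_k$ under your reading of $N(\cdot)$, would cover the paper's content as well.

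Three smaller points.
\begin{enumerate}
\item The cascade is your interpretation of the rule; Algorithm~\ref{alg:cacos} performs a single decrement $C(u,v)\leftarrow C(u,v)-1$. Termination, nestedness and persistence of the triangles of $G_k$ hold under either version. Your claim that every edge of $\tilde G_k$ lies on a triangle, however, needs $k\ge\delta$ and relies on the cascade. With a single decrement, an edge of score $k+1$ whose endpoints have no common neighbour at all moves into $S_k$ with $k\ge\delta$ and stays there triangle-free.
\item Your persistence argument only considers removals at level $k$. You should add that a triangle edge with original score $k'>k$ may be demoted from level $k'$, but never below $k$, because the third vertex of the triangle is a common neighbour in $G_k$.
\item $\tilde G_k\subseteq\tilde G_{k-1}$ is immediate from the superlevel-set definition, so ``scores only decrease'' is not needed there.
\end{enumerate}
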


In this module, to partition a graph into cohesive subgraphs,  $\cacose$ employs the $k$-core decomposition algorithm in~\cite{batagelj2003m}. Initially, it calculates the edges coreness following the definition (\ref{def:2}). For example, edge $(v_4, v_6)$ is a part of both the $1$-core and the $2$-core subgraphs. As $(2>1)$, the coreness score $C(v_4, v_6) = 2$. Similarly, $(v_6, v_7) \in G_{1} \cap G_{2}$, hence, $C(v_6, v_7) = 2$. 

Simultaneously, we examine the existence of narrow edges in $k$-core subgraphs. According to theorem (\ref{def:3}), such edges are removed from $k$-core subgraph and reassigned to the $(k-1)$-core subgraph. For example, in Figure~\ref{fig:process_diagram} when $k=3$, the edge $(v_4, v_7)$ has no support. It may act as a noisy channel during neighborhood aggregation in GNNs. Hence, for $\delta=3$, $(v_4, v_7)$ is pruned from $G_{3}$ and assigned to $G_{2}$, then $C(v_4, v_7) = 2$. After assigning scores to each of the edges, the graph is decomposed into subgraphs corresponding to the same edge-score groups -
\begin{equation}
\begin{aligned}
S &= \{ S_{k} \}_{k=1}^{k_{\max}}, \quad S_{k} = (V_{S}, E_{S}), \\
E_{S} &= \{ (u,v) \in E \mid C(u,v) = k \}
\end{aligned}
\label{eq:5}
\end{equation}
For example, in Fig.~\ref{fig:process_diagram}, the final set of decomposed subgraphs is $S = \{S_2, S_3, S_4\}$. Note that $S_{k} \neq G_{k}$, e.g $S_{3} \neq G_{3}$. Although many nodes overlap across different subgraphs, they consistently carry the same weighted edges within each subgraph. Hence, these score-based partitions avoid structural inconsistencies and preserve meaningful subgraphs for graph operations. 
\begin{figure*}[t!]
    \centering
    \includegraphics[width=\textwidth]{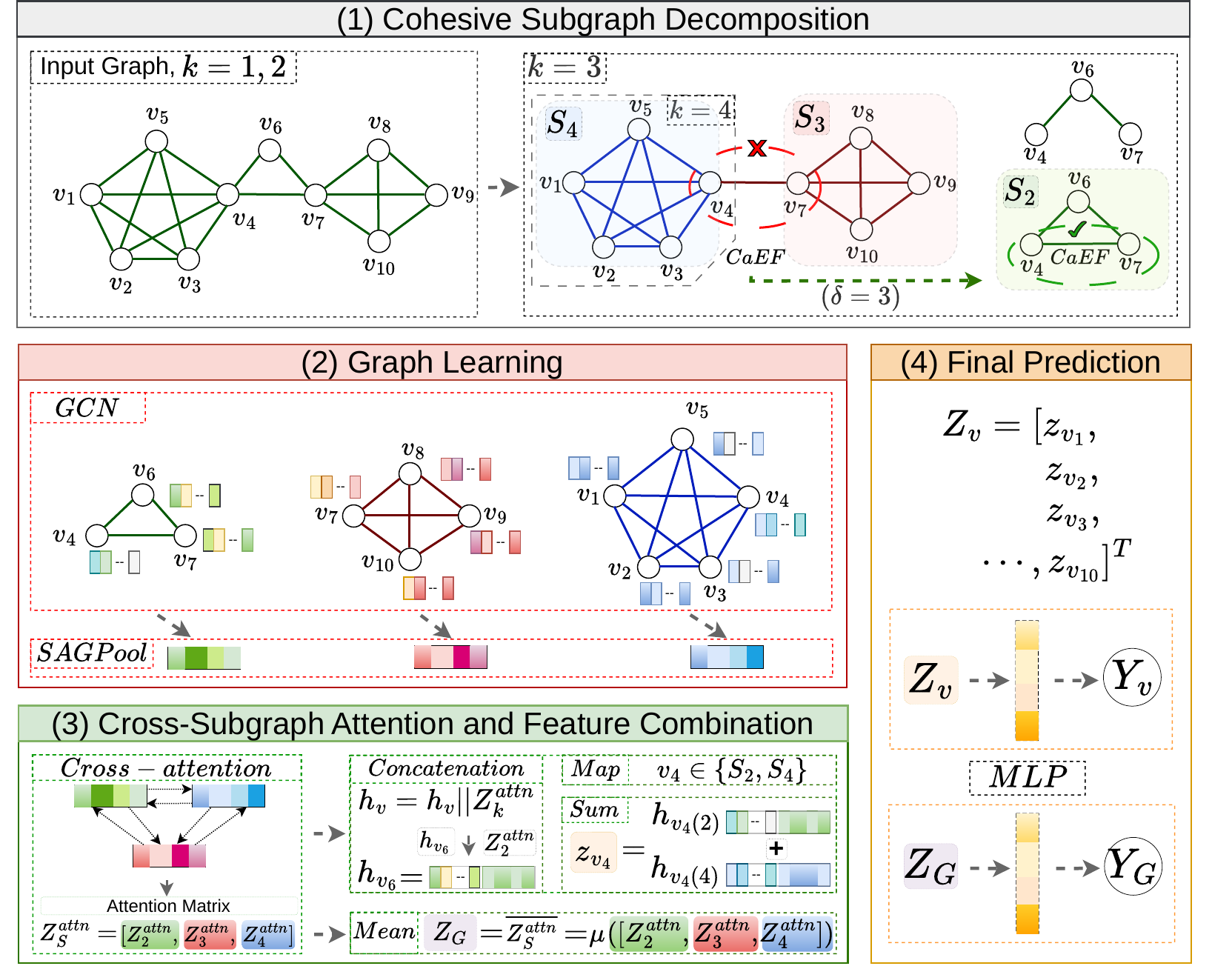}
    \caption{ Architecture of \textbf{$\cacose~(\delta = 3)$.}} 
    
    
    \label{fig:process_diagram}
\end{figure*}

\subsubsection{$(2)$ Graph Learning.} In this part, $\cacose$, particularly aims to embed the topological information of all decomposed subgraphs into node representations. For this purpose, it applies the $GCN$ to each subgraph and obtains the representation of each vertex $h_v \in H_{k}$ separately. 
\begin{align}
H_{k} = \text{GCN}_{k}(S_{k}) 
\label{eq:6}
\end{align}
$H_{k} \in \mathbb{R}^{N_{v_S}\times h}$ presents the node feature matrix, where $N_{v_S} = |S_{k}|$ denotes the number of vertices in the subgraph. Nevertheless, Just learning within subgraph get local cohesive information but loss global information. To get the global information of each subgraph, $\cacose$ attains subgraph embeddings via employing the self-attention graph pooling $(SAGPool)$. 
\begin{align}
Z_{k} = \text{SAGPool}_{k}(S_{k},H_{k})
\label{eq:7}
\end{align}
This pooling encodes essential global structural information of the subgraphs through relevant neighborhood selection, $Z_{k} \in \mathbb{R}^{d_{S}}$, where, $d_{S}$ presents the pooling dimension. It effectively filters out candidate nodes' task-irrelevant or noisy neighbors even from highly heterophilic networks.  Combining this subgraph embeddings with node embeddings provides crucial global information to vertices. 

\subsubsection{$(3)$ Cross-Subgraph Attention \& Feature Combination.} 
Cohesion-centric partitioning reduces bottleneck loads but causes loss of long-range dependency among vertices. Additionally, the subgraphs' embeddings via pooling capture only local structural information while overlooking other subgraphs. Hence, a way to recover global connectivity across partitions is required. The attention mechanism~\cite{vaswani2017attention} enables the modeling of long-range dependencies through sequential learning. Toward this goal, cross-subgraph attention facilitates communication and information propagation among distinct sequence subgraphs. 

In this stage, $\cacose$ employs cohesion-sensitive subgraph attention to update the subgraph embeddings. Each subgraph embedding is processed through cross-attention to encode mutual information across regions. Attention~\cite{vaswani2017attention} help in capturing essential awareness among entities. The pooled subgraphs’ embeddings are represented as, $Z_{S} =\begin{bmatrix} Z_{1},Z_{2}\dots Z_{k_{max}}\end{bmatrix}^\top \in \mathbb{R}^{N_{S}\times d_S}$, where $N_{S} = |S|$ denotes the number of subgraphs and  $d_S$ as feature dimension. From the subgraphs feature matrix, the query, key, and value matrices are computed as $Q = Z_{S}W_Q,~ K= Z_{S}W_K,~ V = Z_{S}W_V$. Here, $W_Q, W_K, W_V$ are learnable weight matrices. Then, the attention scores are measured and subgraphs representations are updated as:
\begin{equation}
Attn_{S} = \mathrm{softmax}\!\left(\frac{QK^T}{\sqrt{d_S}}\right),
\quad
Z^{attn}_{S} = Attn_{S} V
\label{eq:8}
\end{equation}

$Z^{attn}_{S} = \begin{bmatrix} Z_{1}^{attn},Z_{2}^{attn}\cdots Z_{k_{max}}^{attn}\end{bmatrix}^T \in \mathbb{R}^{N_{S} \times d_S}$ presents the updated representation of a subgraph, incorporating other subgraphs' attentions where $Attn_{S} \in \mathbb{R}^{N_{S} \times N_{S}}$ denotes subgraphs' attention matrix. Such as, in Fig~\ref{fig:process_diagram}, $\cacose$ computed the cross-attentive matrix as,  $Z^{attn}_{S} = \begin{bmatrix} Z_{2}^{attn},Z_{3}^{attn}, Z_{4}^{attn}\end{bmatrix}^T$. 
It is noteworthy that the procedure of the attention mechanism seems like complete graph learning. Meanwhile, cohesion-sensitive partitions produce a smaller number of subgraphs; therefore, cross-subgraph attention adds only negligible computational overhead. The final graph representation is derived by taking the mean $(\mu)$ of the cross-attentive subgraphs embeddings, preserving inter-subgraph relational information.  
\begin{flalign}
Z_G = \overline{Z_{S}^{attn}} = \mu(\begin{bmatrix} Z_{1}^{attn},Z_{2}^{attn}\cdots Z_{k_{max}}^{attn}\end{bmatrix})
\label{eq:9}
\end{flalign}
$\cacose$ obtains informative nodes' embeddings by concatenating $(\parallel)$ each subgraph's attentive features with its vertices representations.
\begin{flalign}
h_v = h_v \parallel Z^{attn}_{k},\quad v \in S_{k}
\label{eq:10}
\end{flalign}
 If a node belongs to multiple subgraphs, a mapping $(Map)$ function matches its occurrences, and all of its representations are summed. Such as, for $v \in \{S_{l}, S_{m},  S_{n}\}$ and  $S_{l}, S_{m}, S_{n} \subseteq S$, the final node representations is computed as:
 \begin{flalign}
z_v = \sum_{ k \in \{l,m,n\}}h_{v(k)}
\label{eq:11}
\end{flalign}
For instance, in Fig~\ref{fig:process_diagram}, node $v_4$ is part of both $S_2$ and $S_4$. Hence, the final representations is summed $(Sum)$ as: $z_{v_4} = h_{v_4(2)} + h_{v_4(4)}$. 
\subsubsection{$(4)$ Final Prediction.}
Decomposition reduces excessive information processing through bottlenecked channels in graph. Subsequently, selection-based learning provides compact and meaningful subgraph representations. Following that, cross-subgraph attention alleviates long-hop dependency by modeling interactions among vertices across different subgraphs. Finally, the processed nodes' $(Z_{v} = \begin{bmatrix} z_{v_1},z_{v_2}\cdots z_{v_{N_v}}\end{bmatrix}^T)$ and graphs' $(Z_{G})$ representations are passed through a multilayer perceptron $(MLP)$ to produce final predictions $Y_v$ (NC) and $Y_G$ (GC). Experimental results validate the model's ability in mitigating oversquashing and enhance performance on downstream inference. The detailed examination of the $\cacose$ algorithm, its complexity, theoretical justification, and scalability analysis are available in Appendix~\ref{sec:appendix}. 
\section{Related Works}\label{sec:related_works}
\textbf{Graph Decompositions.} Numerous graph decomposition algorithms are applied to improve GNNs' effectiveness. At the earlier stage of GNNs, spectral clustering~\cite{zhang2021spectral,bianchi2020spectral} methods were much more popular along with the hierarchical~\cite{ying2018hierarchical} and modularity-based~\cite{tsitsulin2023graph} clustering models. However, due to expensive eigenvalue decomposition and clustering assignment steps, these methods encounter relatively higher time complexity.



\noindent\textbf{Cohesion-sensitive Decompositions.}
These algorithms are mostly applied to determine the interconnectedness between nodes in multiple network regions for solving different domain problems: graph compression~\cite{akbas2017truss}, high-performance computing~\cite{liu2024parallel}, analyzing social networks~\cite{chen2024breaking}, etc. Only a few methods utilize these algorithms to enhance the effectiveness of GNNs. TGS~\cite{hossain2024tackling} utilizes the $k$-truss algorithm for edge scoring and sparsifies noisy edges to mitigate oversmoothing in GNNs. Another model, CTAug~\cite{wu2024graph}, utilizes $k$-truss and $k$-core algorithms to provide cohesive subgraph awareness and improve graph contrastive learning (GCL). Nonetheless, due to repetitive edge filtering for sparsification and GCL’s resource-expensive nature, both models require longer runtime.  

\noindent\textbf{Oversquashing.} Prior work~\cite{alon2020bottleneck} illustrates that due to bottlenecks in the graph, long-range neighborhood signals are distorted, which downgrades GNNs' performance in downstream graph learning tasks. SDRF~\cite{topping2021understanding} proposed a curvature-based graph rewriting to detect edges responsible for the information bottleneck. FoSR~\cite{karhadkar2022fosr} applies systematic edge addition operations in graphs in accordance with spectral expansion, while BORF~\cite{nguyen2023revisiting} demonstrates the impacts of curvature signs for oversquashing and oversmoothing. However, due to dependency on intermediate layer output and subgraph matching with optimal transport, these models show inconsistency in large-scale graphs' operations. Another approach, GTR~\cite{black2023understanding}, uses effective resistance and a repeated edge addition technique to reduce the impact of oversquashing in GNN. However, this method focuses on the entire graph's information for edge rewiring, which increases complexity in executing larger graphs. Recently, GOKU \cite{liang2025mitigating} utilizes spectrum-preserving graph rewiring, first densifying the network and subsequently applying structure-aware graph sparsification. In contrast, LASER~\cite{barbero2023locality} adopts locality aware sequential rewiring while considering multiple graph snapshots. GraphViT~\cite{he2023generalization} leverages the Vision Transformers combined with MLP mixing mechanism to model long-range dependencies in graphs. On the other hand, LRGB~\cite{dwivedi2022long} addresses the limitations of  benchmark GNNs and Transformers in analyzing  long-range vertex interactions. However, both approaches primarily focus on relatively ordered and structurally regular datasets. In contrast, our work concentrates on reducing oversquashing in complex, irregular real-world and social networks. 

\section{Experiment Results}\label{sec:experiment}
This section covers $\cacose$'s experiment outcomes' information. First, it discusses the datasets and baselines. Then, it describes the experimental setup and, finally, presents the model's results compared to other baselines with various experiments. \textbf{Note that, Prior works~\cite{dwivedi2022long,alon2020bottleneck} experiment on synthetic datasets (tree neighbors-match, ring transfer, etc.) are unsuitable for $k$-core due to their fixed structure. Hence, our experiment mostly focuses on real-world complex networks.} 
\subsection{Experimental Settings}
\subsubsection{Datasets and Baselines.} Our $\cacose$ model is evaluated on $8$ datasets $(D_v)$ for the node classification task. Where five datasets are homophilic:  \textbf{Cora}, \textbf{C}ite\textbf{Seer}, \textbf{Co}Author \textbf{CS}, \textbf{Am}azon \textbf{C}o\textbf{mp}uters and \textbf{Am}azon \textbf{P}ho\textbf{to}s. Besides, three datasets are heterophilic: \textbf{Cham}eleon, \textbf{Squir}rel and \textbf{Texas}. In case of graph classification tasks we experiment on six different datasets where four from social network domain: \textbf{IMDB-B}INARY, \textbf{IMDB-M}ULTI, \textbf{COLLAB} and \textbf{R}ED\textbf{D}I\textbf{T-B}INARY. Other two from the biomedical domian: \textbf{MUTAG} and \textbf{PROT}EINS.  

Our model is compared with the $11$ standard methods for the node classification \textbf{(NC)} tasks: GCN~\cite{kipf2016semi}, GAT \cite{velivckovic2017graph}, SAGE \cite{hamilton2017inductive}, LRGB \cite{dwivedi2022long}, 
BORF \cite{nguyen2023revisiting}, FOSR \cite{karhadkar2022fosr}, SDRF \cite{topping2021understanding}, GTR \cite{black2023understanding}, DR \cite{attali2024delaunay}, LASER \cite{barbero2023locality} and GOKU \cite{liang2025mitigating}. For graph classification \textbf{(GC)}, we compare $\cacose$ with seven methods (BORF, FOSR, SDRF, GTR, DR, LASER and GOKU), excluding the first four baselines. 

\begin{table}[t!]
\centering
\caption{Node Classification Accuracy Comparison. Highest accuracy is \textbf{bolded}, second-best is \underline{underlined}. OOM: Out of Memory in execution.}
\label{tab:res_node_classification}
\begin{tabular}{@{}lcccccccccc@{}}
\hline
\cellcolor{gray!15} \textbf{M / $D_v$} & \cellcolor{gray!15} Cora & \cellcolor{gray!15} C.Seer & \cellcolor{gray!15} Co. CS & \cellcolor{gray!15} Am. Cmp & \cellcolor{gray!15} Am. Pto & 
\cellcolor{gray!15} Texas &
\cellcolor{gray!15} Cham. & \cellcolor{gray!15} Squir. \\ \hline
GCN & \underline{84.24} & 69.10 & 88.86 & 87.77 & 92.12  & 52.36 & 63.32 & \underline{48.79} \\
GAT & \textbf{85.00} & 67.94  & 87.13 & 88.41 & 92.01 & 52.37  & 61.70 & 46.18 \\
SAGE & 83.60 & 67.76  & 88.42 & 87.22 & 91.58 & 56.05  & 62.64 & 47.15 \\
LRGB & 71.02 & 57.58 &  59.50 & 70.34 & 74.04 & \underline{57.36} & 46.93 & 30.49 \\
BORF & 83.68 & 67.08 & \underline{90.52} & 89.79 & 91.93 & 54.21 & 60.24 & OOM \\
FOSR & 83.66 & 67.24 & 90.44 & \underline{89.82} & 91.83 & 52.37 & 60.31 & 40.19 \\
SDRF & 84.04 & 67.42  & 90.56 & 86.79 & 91.71 & 52.63 & 60.74 & 43.02 \\
GTR & 84.07 & \underline{69.15}  & 88.87 & 85.37 & 91.76 & 52.37 & 63.78 & 48.67 \\
DR & 44.35 & 24.90  & 72.50 & 71.43 & 79.86 & \textbf{69.45} & 27.87 & 23.85 \\
LASER & 75.77 & 64.26 & 76.90 & 38.63 & 67.36 & 32.63  & 41.90 & 25.87 \\
GOKU & 82.66 & 66.08 & 90.04 & OOM & \underline{92.51} & 37.10 & \underline{65.90} & 46.74 \\ \hline
\textbf{$\cacose$} & \textbf{85.00} & \textbf{69.42} &  \textbf{90.73} & \textbf{90.43} & \textbf{92.98} & 
54.47 &
\textbf{68.99} & \textbf{58.86} \\ \hline
\end{tabular}
\end{table}

\begin{table}[t!]
\vspace{-3mm}
\centering
\caption{Comparison of different methods on graph classification benchmarks.}
\label{tab:res_graph_classification}
\begin{tabular}{lcccccc}
\hline
\cellcolor{gray!15} M/ $D_G$ & \cellcolor{gray!15} MUTAG & \cellcolor{gray!15} IMDB-B & \cellcolor{gray!15} \cellcolor{gray!15} IMDB-M & \cellcolor{gray!15} RDT-B & \cellcolor{gray!15} COLLAB & \cellcolor{gray!15} PROTEINS \\
\hline
SDRF & 74.53 & 62.90 & 41.53 & 85.40 & 70.22 & 66.88 \\
FOSR & 75.89 & 60.40 & 37.33 & 83.25 & 69.85 & 66.70 \\
BORF & 64.00 & 60.82 & 38.20 & 84.92 & OOM & 68.41 \\
GTR & \underline{76.00} & \underline{70.20} & 45.33 & \textbf{89.65} & 68.02 & 71.52 \\
DR & 71.00 & 53.60 & 35.60 & 73.60 & 55.54 & \textbf{72.23} \\
LASER & 68.00 & 67.30 & \underline{45.40} & 81.70 & \underline{71.32} & 71.07 \\
GOKU & 74.50 & 64.80 & 42.40 & 80.55 & 69.02 & 70.80 \\ \hline
\textbf{$\cacose$} & \textbf{76.99} & \textbf{73.20} & \textbf{49.14} & \underline{85.70} & \textbf{80.95} & \underline{71.79} \\ \hline
\end{tabular}
\end{table}
\vspace{-3mm}

\subsubsection{Implementation Details.} We evaluate our model by running baselines' codes for a fair comparison. Except for Cora $(1208, 500, 500)$ and CiteSeer $(1812, 500,$ $ 500)$, we split other datasets into $48\%$, $32\%$, and $20\%$ for training, validation, and testing, respectively. In the case of graph classification (GC) for all datasets, the ratio is $(80\%:10\%:10\%)$.  
Each model runs for up to $250$ iterations (NC) and $100$ iterations (GC), with early stopping after $50$ and $25$ consecutive epochs without validation improvement, respectively. Nodes' features are initialized with $1-hot$ encoding and hidden dimension in GNNs is set as $128$. For a learning rate of $2.5e-3$, $\cacose$ utilizes $l_2$ regularization with a weight decay of $1e-4$. The pooling ratio is set as $0.5$ in both cases while the number of heads are set as $2$ (NC) and $1$ (GC). Besides, we set the $CaEF$ threshold $\delta$ as $3$ for both tasks. Finally, we split each dataset using $10$ different seeds and report the mean accuracy as the final result. 

\subsection{Result Analysis}
\subsubsection{Node Classification and Graph Classification.} Table~\ref{tab:res_node_classification} demonstrates the comparison of our model with the baselines in the accuracy metric. In most cases, $\cacose$ achieves a superior gain $\left( \frac{acc(\cacose) - acc(baseline)}{acc(baseline)}*100\right)$ in ($\%$)
over other methods. Particularly on the dense heterophilic: Chameleon and Squirrel datasets it shows performance gains of $4.69\%$ and $20.63\%$, respectively, over the nearest performing baselines GOKU and GCN. 

Table~\ref{tab:res_graph_classification} presents the performance of our model in comparison to standard oversquashing addressing baselines. The $\cacose$ achieves better or almost similar performance over the baselines. Notably, on the IMDB-BINARY and COLLAB datasets, it surpasses the other baselines with substantial gains of $4.27\%$ and $13.50\%$, respectively, where the closest scoring baselines are GTR and LASER.  

\begin{figure*}[t!]
    \centering
    \begin{subfigure}{0.24\textwidth}
        \centering
        \includegraphics[width=\textwidth]{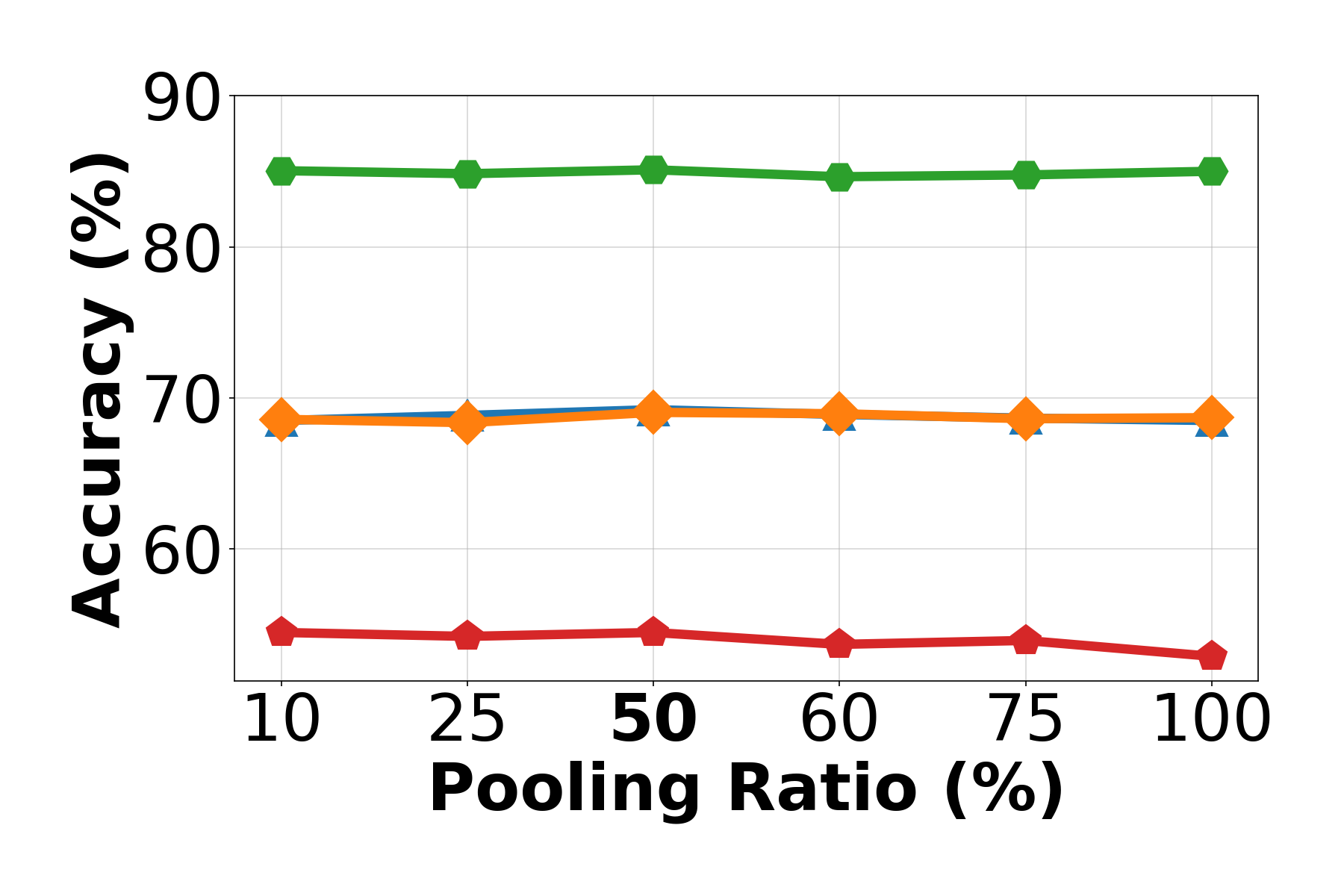}
        \caption{NC (PR)}
        \label{fig:pooling-ratio-k-core}
    \end{subfigure}%
    ~
    \begin{subfigure}{0.24\textwidth}
        \centering
        \includegraphics[width=\textwidth]{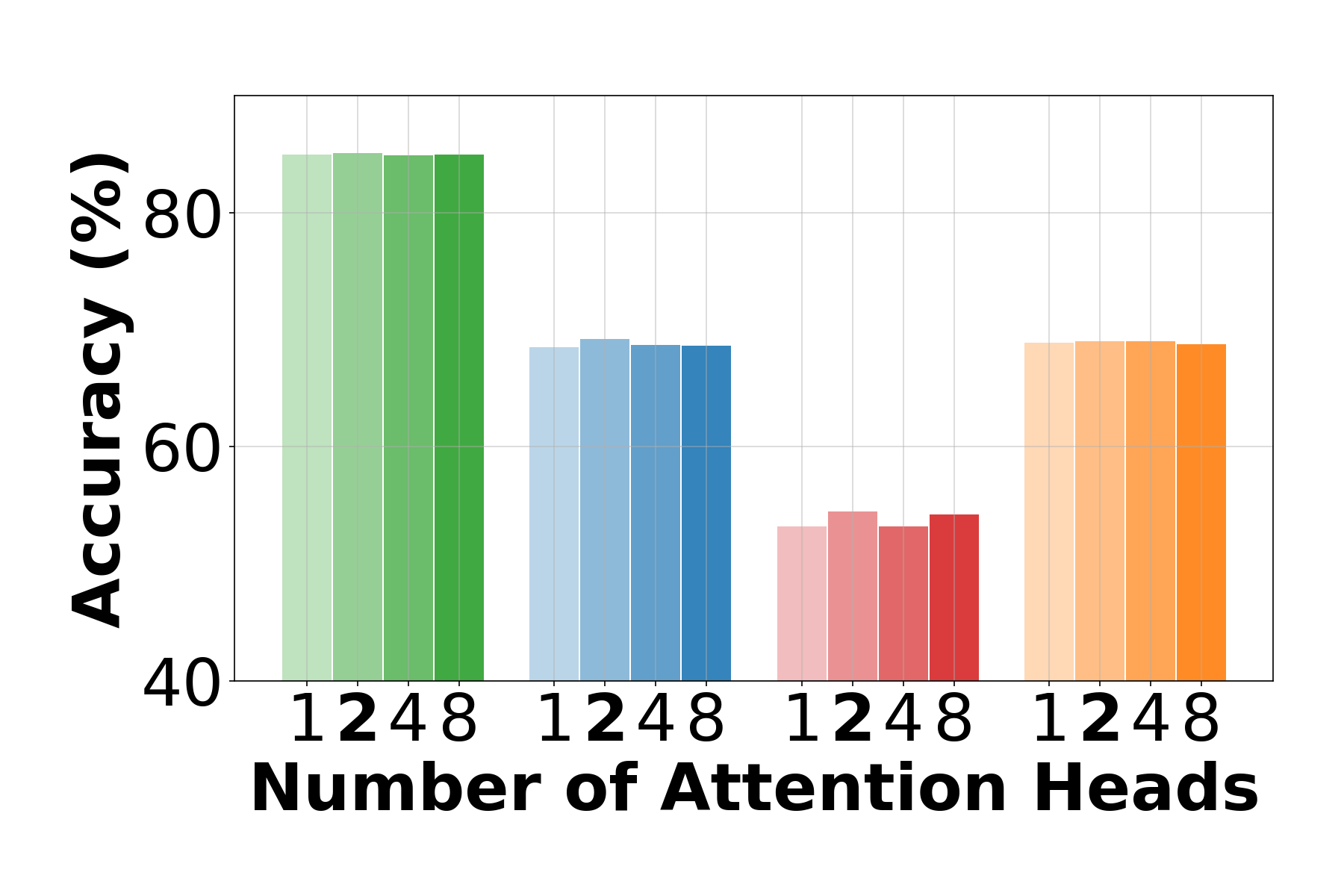}
        \caption{NC (NH)}
        \label{fig:num-heads-k-core}
    \end{subfigure}%
   ~
    \begin{subfigure}{0.24\textwidth}
        \centering
        \includegraphics[width=\textwidth]{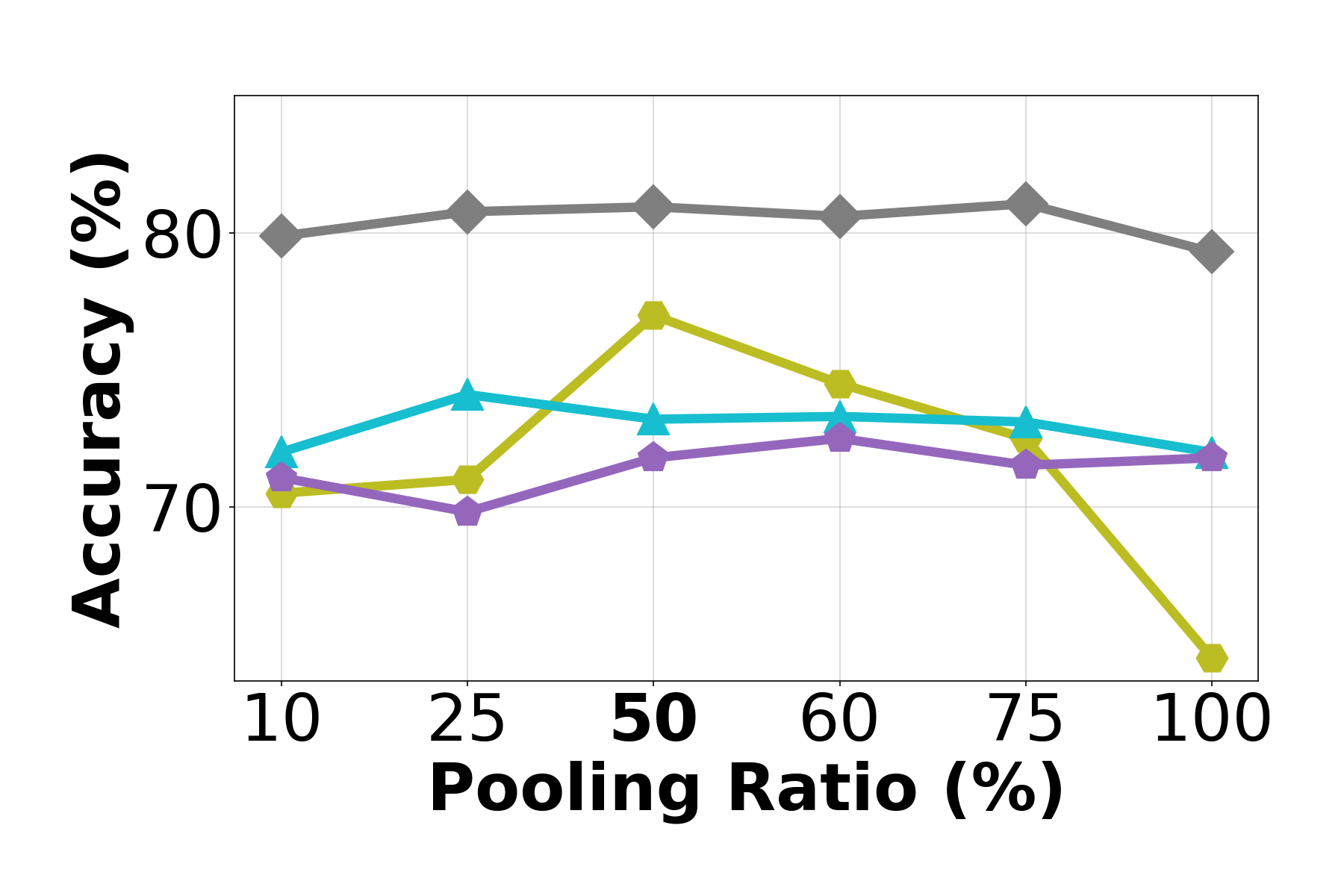}
        \caption{GC (PR)}
        \label{fig:pooling-ratio-gc-core}
    \end{subfigure}%
    ~
    \begin{subfigure}{0.235\textwidth}
        \centering
        \includegraphics[width=\textwidth]{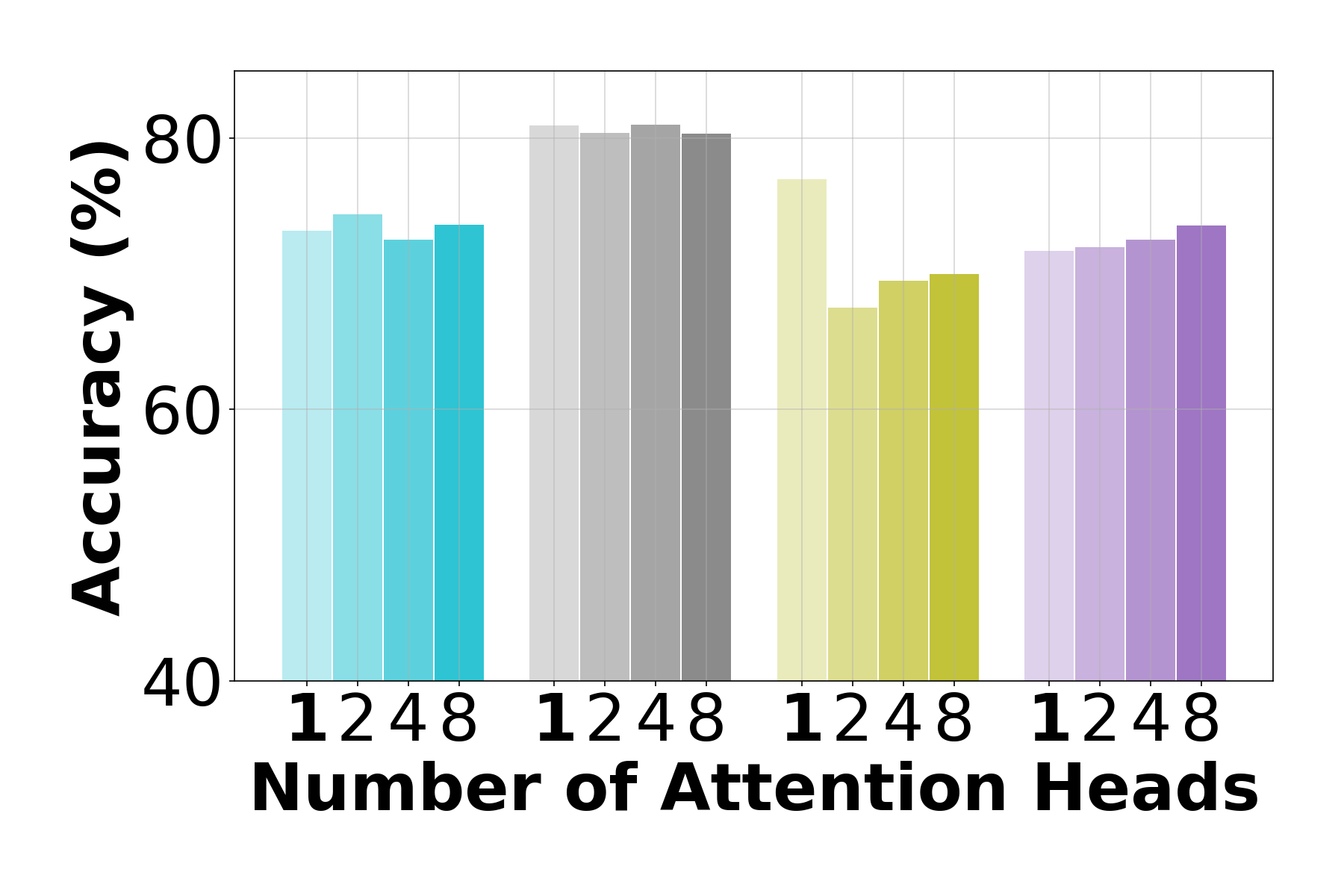}
        \caption{GC (NH)}
        \label{fig:num-heads-gc-core}
    \end{subfigure}%
    
    \caption{\textbf{Sensitivity Analysis}. Varying Pooling Ratios (PR) and Numbers of Heads (NH). $\cacose$'s settings for Node Classification (PR = 50\% and NH = 2) and for Graph Classification (PR = 50\% and NH = 1) are highlighted in bold. For NC datasets, \textcolor{ForestGreen}{\rule{0.6em}{0.6em}} Cora \;
\textcolor{tabblue}{\rule{0.6em}{0.6em}} CiteSeer \;
\textcolor{tabred}{\rule{0.6em}{0.6em}} Texas \; \textcolor{orange}{\rule{0.6em}{0.6em}} Chameleon. For GC datasets \textcolor{cyan8}{\rule{0.6em}{0.6em}} IMDB-B \;
\textcolor{gray8}{\rule{0.6em}{0.6em}} COLLAB \;
\textcolor{olive8}{\rule{0.6em}{0.6em}} MUTAG \; \textcolor{purple8}{\rule{0.6em}{0.6em}} PROTEINS.}
    \label{fig:sensitivity-analysis}
\end{figure*}
\vspace{-4mm}
\begin{table}[t!]
\centering
\caption{Performance comparison across different values of $\delta$. In $\cacose$, $\delta =3$. Bold fonts indicate better accuracy than $\cacose$. }
\begin{tabular}{l|c|c|c|c|c||l|c|c|c|c|c}
\hline
\cellcolor{gray!15} $D_v$ / \textbf{$\delta$} &\cellcolor{gray!15} \textbf{3} &\cellcolor{gray!15} 4 & \cellcolor{gray!15} 5 & \cellcolor{gray!15} 6 & \cellcolor{gray!15} 7 & \cellcolor{gray!15} $D_G$ / \textbf{$\delta$} & \cellcolor{gray!15} \textbf{3} & \cellcolor{gray!15} 4 & \cellcolor{gray!15} 5 & \cellcolor{gray!15} 6 & \cellcolor{gray!15} 7 \\ \hline
Cora & 85.00 & 84.88 & \textbf{85.22} & 84.96 & 85.10 & RDT-B & 85.70 & 83.50 & 83.20 & 83.15 & 81.80 \\
C.Seer & 69.42 & 69.12 & 68.90 & 68.96 & 69.00 & IMDB-B & 73.20 & 71.60 & \textbf{73.40} & 72.70 & \textbf{73.79} \\
Cham. & 68.99 & 68.73 & 68.46 & 68.58 & 68.93 & PROT. & 71.79 & \textbf{72.77} & \textbf{72.05} & 71.52 & \textbf{72.41} \\ \hline
\end{tabular}
\label{tab:performance_delta}
\end{table}

\subsubsection{Sensitivity Analysis.} 
We analyze the impact of the pooling ratio and number of heads on model performance. As shown in Fig.~\ref{fig:pooling-ratio-k-core}, the accuracy curve for $\cacose$'s exhibits minimal fluctuation on four (NC) datasets for lower to higher pooling ratios. In contrast on GC datasets, in Fig.~\ref{fig:pooling-ratio-gc-core} MUTAG's curve displays an uneven trend, while other three demonstrate a moderate movement.  Regarding the number of heads, Fig.~\ref{fig:num-heads-k-core} illustrates that, our model experiences small-scale performance changes on the Texas and CiteSeer datasets with nearly identical on other two NC datasets. For the GC datasets (in Fig.~\ref{fig:num-heads-gc-core}), there is a little fluctuation in the performance on IMDB-BINARY and COLLAB while PROTEINS exhibits a gradual increase in performance. 

Furthermore, in Table~\ref{tab:performance_delta}, we analyze $\cacose$'s performance by changing the $CaEF$ threshold $\delta$ across three NC datasets and three GC datasets. Generally, models performance is observed to decrease as the value of $\delta$ increase. However, fluctuates on some datasets, specially on IMDB-B and PROTEINS.  
\vspace{-4mm}
\subsubsection{Analysis on Bridges in Heterophilic Networks.} In experiments, $\cacose$ shows outstanding performance on heterophilic networks. To determine the underlying reason, we analyze the bridge edges that act as the narrow channels in graphs. For each bridge edge, we examine the 2-hop neighborhood of its endpoint nodes and color the vertices by class label. The same analysis is repeated on the corresponding edges in the decomposed subgraphs. In many cases, the neighborhoods around decomposed heterophilic bridges reveal latent homophily. 
\begin{figure*}[h!]
    \vspace{-6mm}
    \centering
    \begin{subfigure}{0.242\textwidth}
        \centering
        \includegraphics[width=\textwidth]{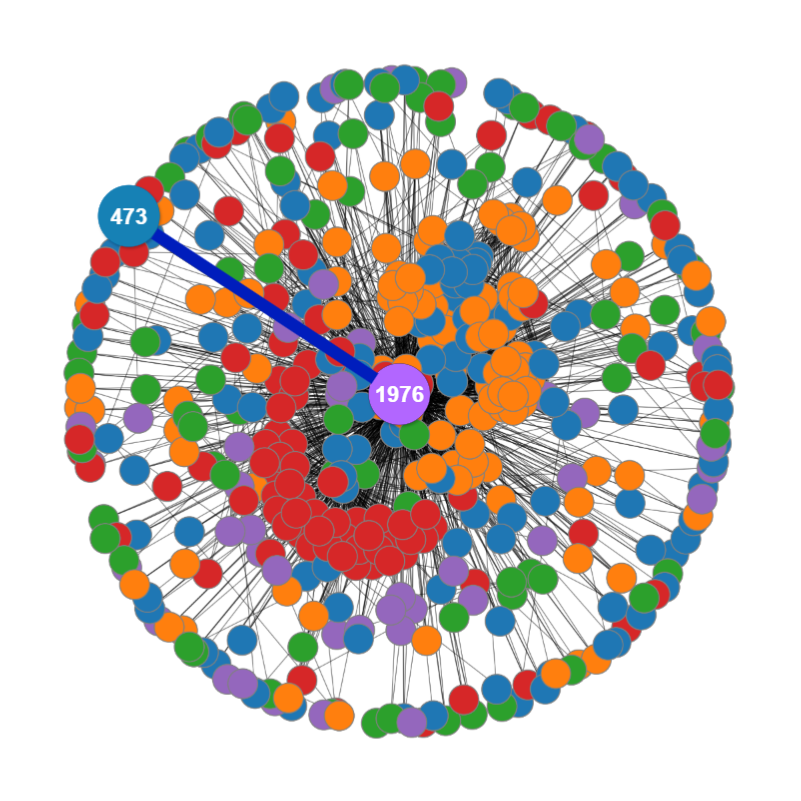}
        \caption{Chm$_{(1976,473)}$}
        \label{fig:bridge_mainG_chameleon_1976_473}
    \end{subfigure}%
    ~
    \begin{subfigure}{0.242\textwidth}
        \centering
        \includegraphics[width=\textwidth]{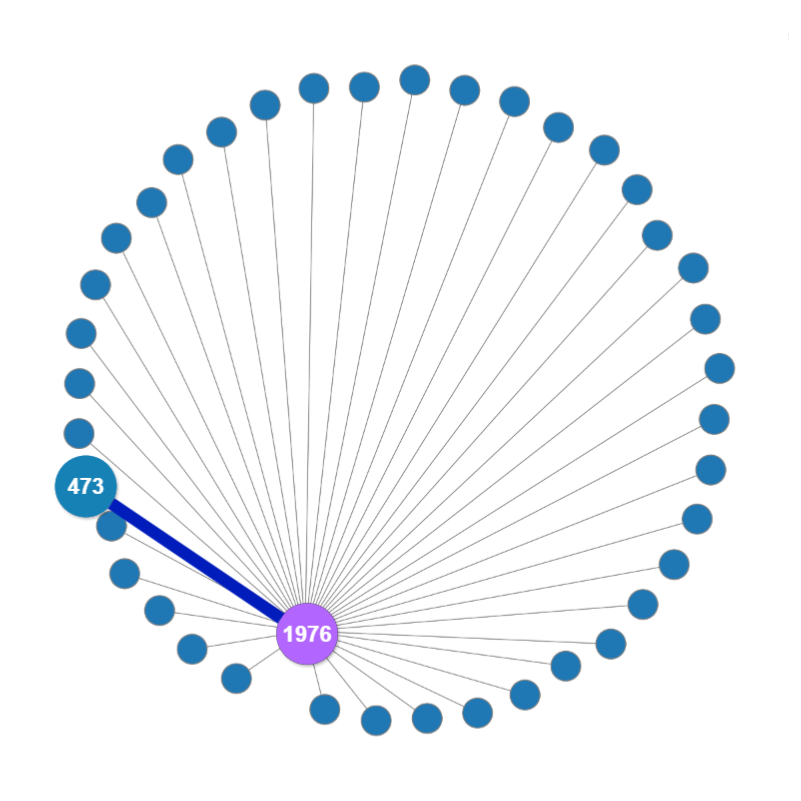}
        \caption{Chm-k${_{1}}_{(1976,473)}$}
        \label{fig:bridge_coreSubG_chameleon_1976_473}
    \end{subfigure}%
    ~
    \begin{subfigure}{0.242\textwidth}
        \centering
        \includegraphics[width=\textwidth]{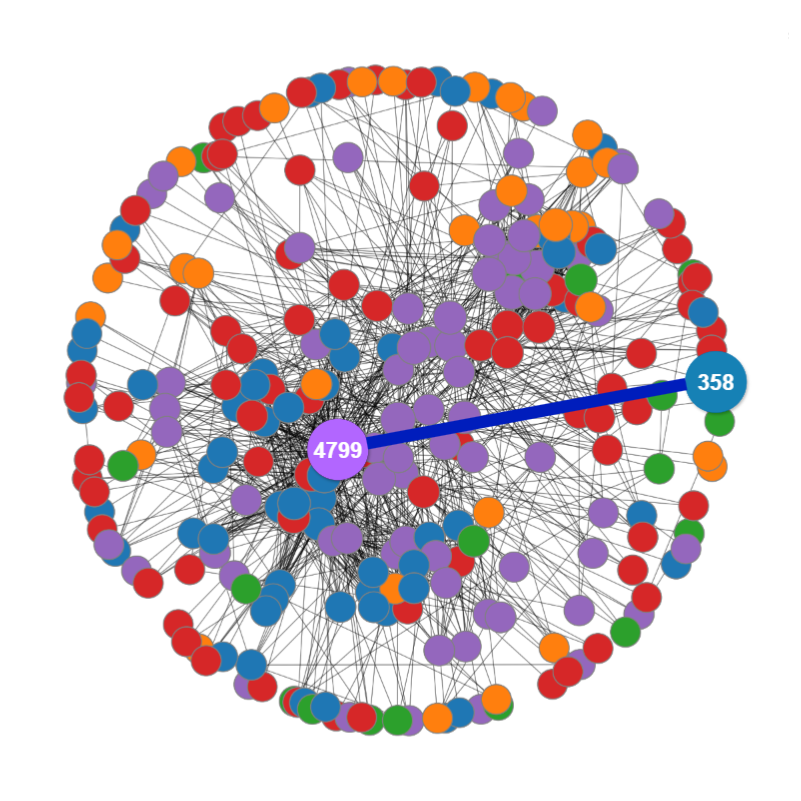}
        \caption{Sqr$_{(4799,358)}$}
        \label{fig:bridge_mainG_squirrel_4799_358}
    \end{subfigure}%
    ~
    \begin{subfigure}{0.242\textwidth}
        \centering
        \includegraphics[width=\textwidth]{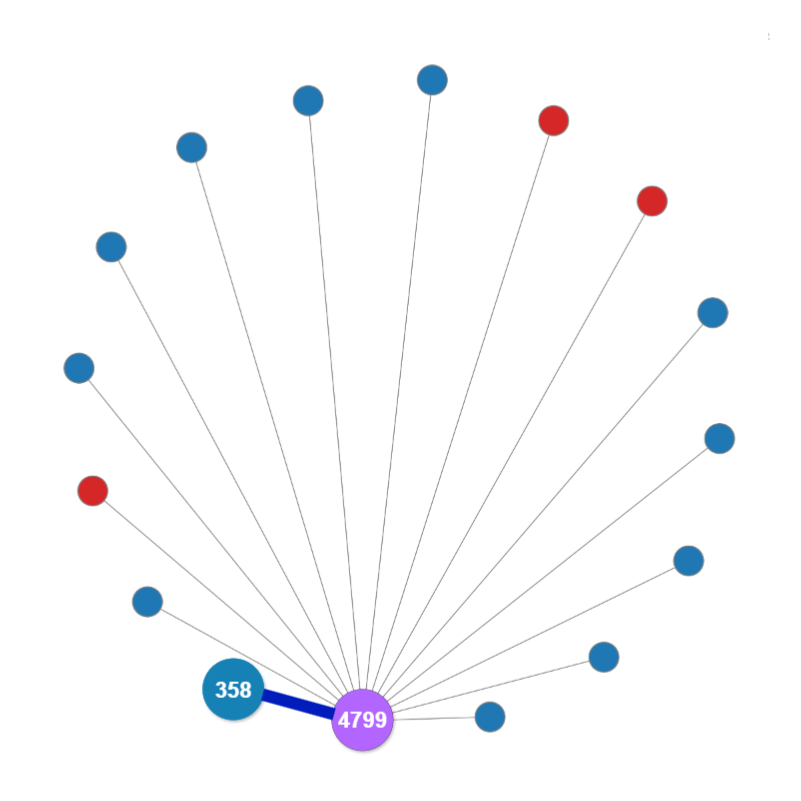}
        \caption{Sqr-k${_{1}}_{(4799,358)}$}
        \label{fig:bridge_coreSubG_squirrel_4799_358}
    \end{subfigure}%
    \caption{\textbf{Snippets of Bridge Analysis}. Edge $(1976,473)$ in Chameleon (Chm) and $(4799,358)$ in Squirrel (Sqr) datasets. $k_{1}$  denotes the subgraph $(S_1)$ and \textcolor{blue}{\rule{2em}{0.3em}} presents the Bridge Edges. 
    }

    \label{fig:extended_pilot_study} 
    \vspace{-6mm}
\end{figure*}

Figure~\ref{fig:extended_pilot_study}, illustrates the $2-hop$ surroundings of bridge edges in the both original graph (Figs. \ref{fig:bridge_mainG_chameleon_1976_473} and \ref{fig:bridge_mainG_squirrel_4799_358}) and the edge-induced partitioned subgraph (Figs. \ref{fig:bridge_coreSubG_chameleon_1976_473} and \ref{fig:bridge_coreSubG_squirrel_4799_358}). In the original graphs, bridge edges show strong heterophilic characteristics, causing cross-class contamination in GNN's aggregation and reducing representational expressivity. After $k$-core decomposition, the surroundings of bridge edges remain heterophilic and resemble star like pattern where hub node differs from its neighbors. However, most client nodes share the same class label. This allow a $2-layer$ GNN to effectively capture the homophilic structure. Hence, the partition  preserves latent homophily in those subgraphs that enhances model's performance. 
\begin{figure*}[t!]
    \centering
    \begin{subfigure}{0.33\textwidth}
        \centering
        \includegraphics[width=\textwidth]{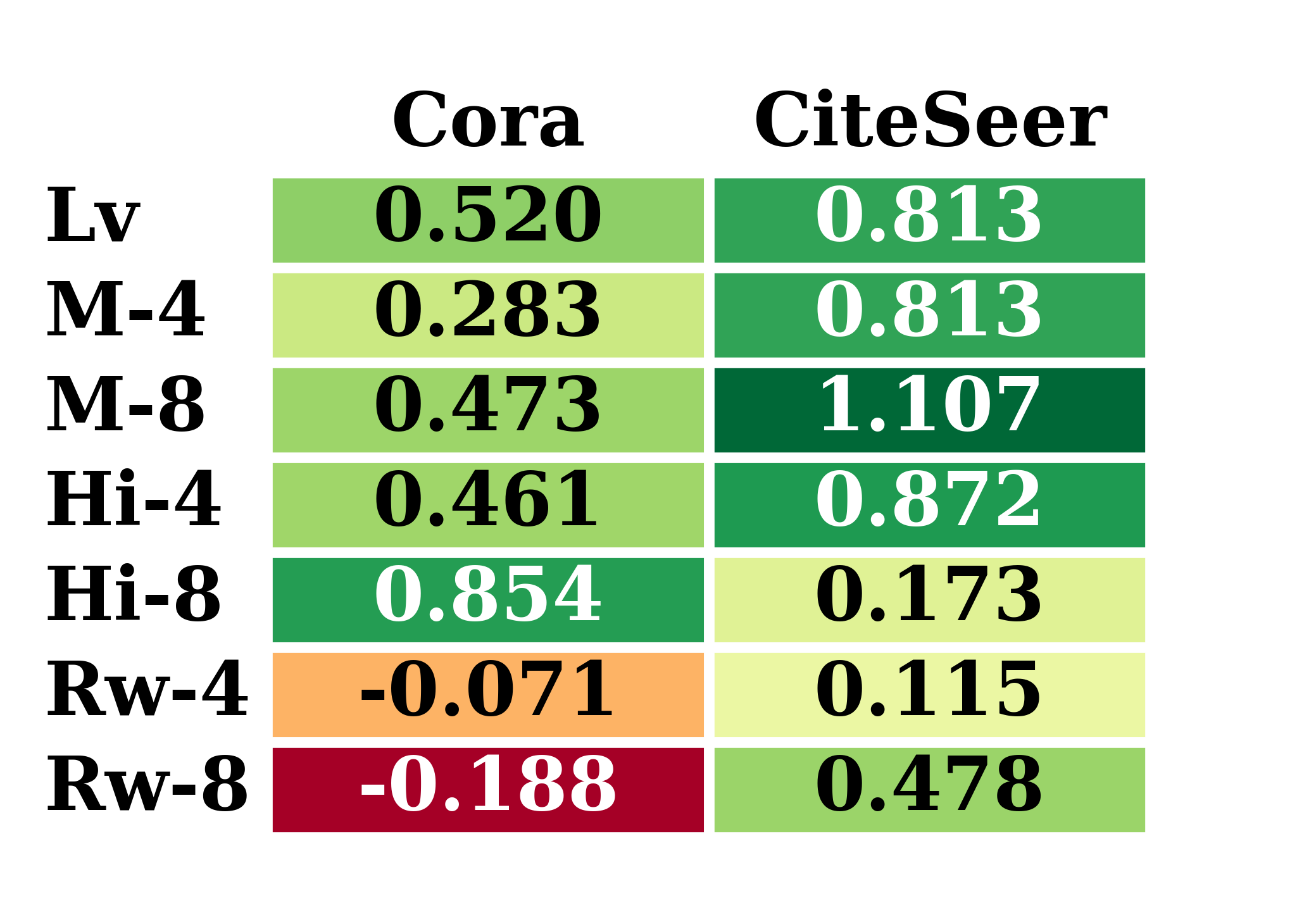}
        \caption{}
        \label{fig:cora-cs-other-decom}
    \end{subfigure}%
    ~
    \begin{subfigure}{0.33\textwidth}
        \centering
        \includegraphics[width=\textwidth]{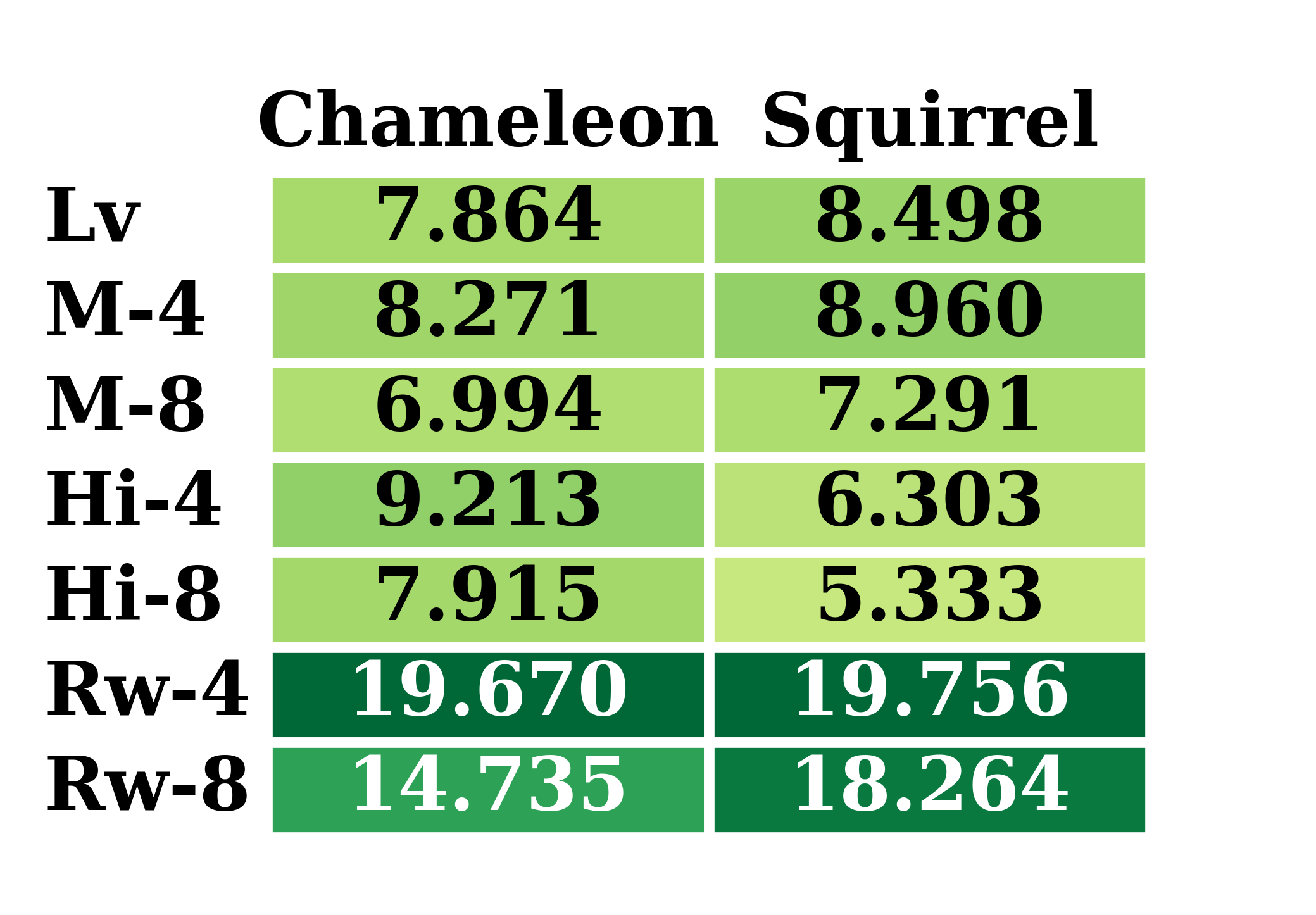}
        \caption{}
        \label{fig:chm-sqr-other-decom-ppr}
    \end{subfigure}%
   ~
    \begin{subfigure}{0.33\textwidth}
        \centering
        \includegraphics[width=\textwidth]{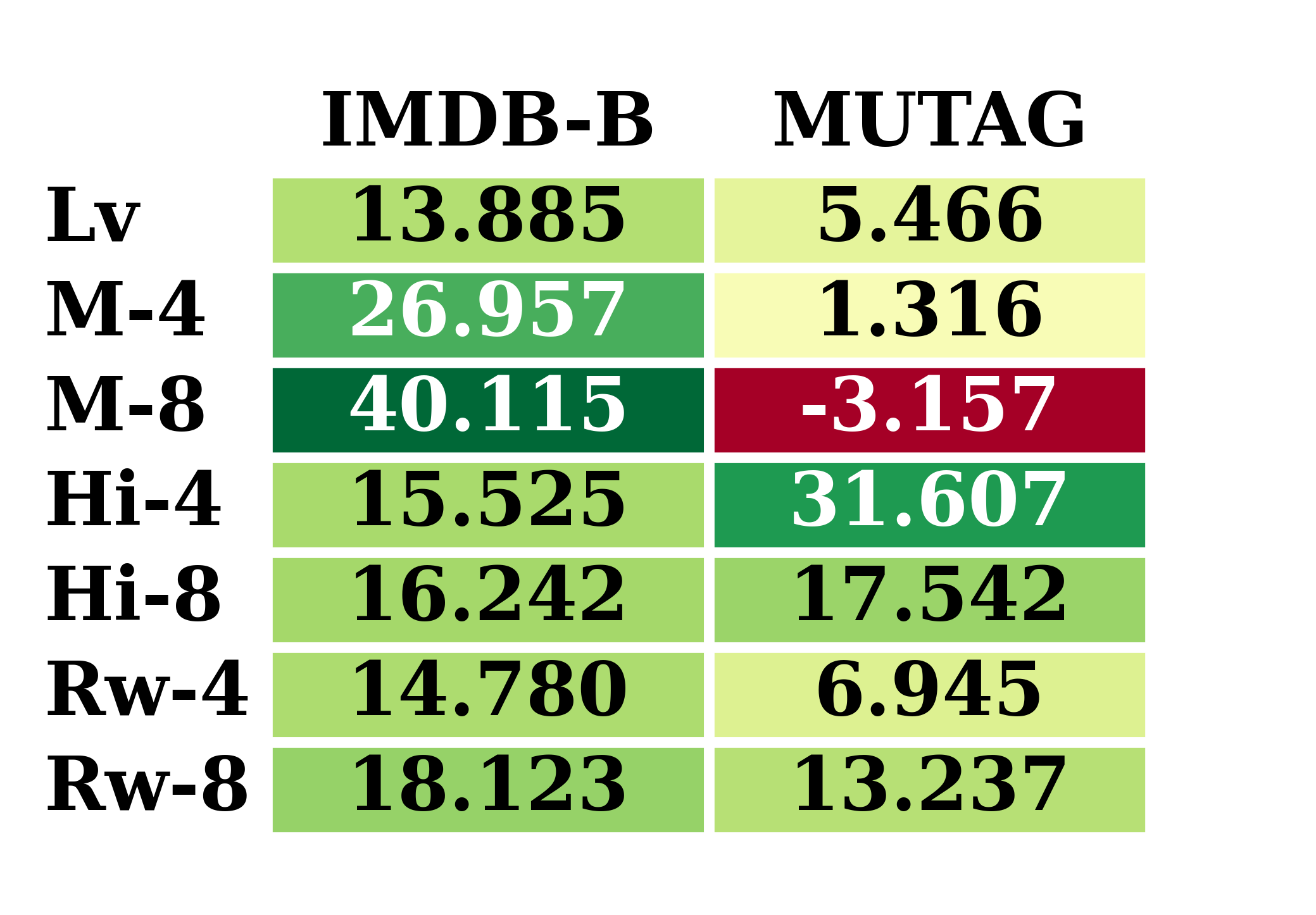}
        \caption{}
        \label{fig:imdb-mutag-other-decom}
    \end{subfigure}%
    \caption{ \textbf{Performance gain (in \%)$-\cacose$ vs other Decomp}ositions: Illustrate on six datasets (4 NC and 2 GC). Louvain $(Lv)$, Metis $(M)$, Hierarchical $(Hi)$, and Random-Walk $(Rw)$. Except for Louvain, the other methods are annotated with the number of partitions, i.e, $(M-4)$ presents Metis with $4$-partitions. Green and red shades present the positive and negative gains respectively.}

    \label{fig:compare_decom_methods}. 
    \vspace{-8mm}
\end{figure*}

\begin{table}[t!]
    \centering
    \caption{Ablation Study and Performance Comparison across Datasets. Accuracy values exceeding $\cacose$ are indicated in bold.}
    \label{tab:ablation-study}
    \begin{tabular}{@{}l |c| c c c | c c@{}}
        \hline
    \cellcolor{gray!15}    \textbf{Change} & \cellcolor{gray!15} \textbf{Component} & \cellcolor{gray!15} \textbf{Cora} & \cellcolor{gray!15} \textbf{C.Seer} & \cellcolor{gray!15} \textbf{Cham.} & \cellcolor{gray!15} \textbf{PROT.} & \cellcolor{gray!15} \textbf{IMDB-B} \\ \hline
        Attention & without & \multirow{2}{*}{84.42} & \multirow{2}{*}{68.38} & \multirow{2}{*}{\textbf{69.23}} & \multirow{2}{*}{\textbf{71.87}} & \multirow{2}{*}{73.00} \\
        Mechanism & cross-Attention & & & & & \\ \hline
        CaEF & without CaEF & 84.96 & 68.41 & 68.73 & 65.89 & 73.10 \\ \hline
        \multirow{3}{*}{SAGPool} & TopkPool & 84.94 & 69.20 & 65.62 & 65.09 & 72.90 \\
         & DMon & \textbf{85.10} & 69.37 & 65.82 & \textbf{74.55} & 67.50 \\
         & GMT & 84.70 & \textbf{69.74} & 65.27 & 56.90 & 60.71 \\ \hline
        \multirow{2}{*}{GCN} & GAT & 83.31 & 68.86 & 65.49 & 61.33 & 71.70 \\
         & SAGE & 83.92 & 69.12 & 60.04 & 63.84 & \textbf{73.90} \\ \hline
    \end{tabular}
\end{table}
\vspace{-4mm}
\subsubsection{Ablation Study.}
In this analysis, we experiment with our model by altering its components. Figure~\ref{fig:compare_decom_methods} illustrates the performance gains of the $k$-core decomposition in $\cacose$ compared to alternative partition methods on six different datasets. We use four NC datasets: two citations networks (Cora, CiteSeer) and two heterophilic networks (Chameleon, Squirrel). Besides, two GC datasets: IMDB-BINARY and MUTAG. With a few exceptions, our model consistently achieves higher accuracy than other decompositions. 

Table~\ref{tab:ablation-study} presents an ablation study on five datasets (three NC and two GC) to asses the contribution of different modules. First, we remove the attention mechanism, followed by the exclusion of the closure component ($CaEF$). In most cases, the fall in accuracy highlights the  significance of these components in $\cacose$. 
Next, we replace SAGPool with alternative pooling methods, including TopKPool~\cite{gao2019graph}, DMonPool~\cite{tsitsulin2023graph} and GMT~\cite{bacciu2021k}. With only a few exceptions, our model consistently outperforms these variants. Finally, we substitute the GCN backbone with GAT and GraphSAGE. Although GraphSAGE enables us to achieve higher accuracy on the IMDB-BINARY dataset, it performs worse on other datasets. Regarding GAT, our model consistently yields superior performance across all datasets.

\section{Conclusion}
In a nutshell, our $\cacose$ model successfully demonstrates its efficacy that facilitates graph representation learning  to overcome long-range dependency in GNNs. Through closure-aware cohesive graph decomposition, it provides essential locality to the vertices. Besides, the SAGPool filters out noisy connections in networks and the attention mechanism provides crucial global information across subgraphs. Extensive experiments on benchmark datasets illustrate its consistency over the standard graph learning models. We expect this technique will open up some new research directions: subgraph-wise graph rewiring, regional graph super-resolution and parallel subgraph learning to develop robust graph representation learning models.

\bibliographystyle{splncs04}
\bibliography{reference}

\newpage

\appendix
\section{Appendix} \label{sec:appendix}

\subsection{Proof through Ollivier-Ricci Curvature}

Olliver-Ricci curvature~\cite{nguyen2023revisiting}---denoted by $\kappa(u,v)$---is a measure of the local similarity of the neighborhoods of two vertices, typically by means of random walk measures centered at those vertices. This value is positive or negative when the overlap between the probability distributions of random walkers between $N(u)$ and $N(v)$ is large or small, respectively.

Let $G = \left(V_G, E_G \right)$ be a simple graph with shortest path distance $d(u,v)$ for all $u,v \in V_G$. Let $\Pi(\mu_u, \mu_v)$ be the family of joint probability distributions of $\mu_u$ and $\mu_v$, and take $\mu_u$ to be the $1$-step random walk measure given by

\begin{equation*}
\mu_u(v) = 
\begin{cases}
\frac{1}{deg(u)} & \text{if }v \in N(u) \\
0 & \text{otherwise.}
\end{cases}
\end{equation*}
Then define the $L^1$-Wasserstein distance as
$$ W_1 \left( \mu_u, \mu_v \right) = \inf_{\pi \in \Pi \left( \mu_u, \mu_v  \right)} \left( \sum_{(p, q) \in V_{G}^{2}} \pi(p,q) d(p,q) \right),$$
which measures the minimum distance random walks between $u$ and $v$ must take to meet each other. The Ollivier-Ricci curvature is then
$$ \kappa(u,v) = 1 - \frac{W_1(\mu_u, \mu_v)}{d(u, v)}. $$
This carries the interpretation that whenever two random walkers are unlikely to meet, we have $\kappa(u,v) < 0.$

\begin{theorem}\label{thm:ricci-curvature}
For all $(u,v) \in E_G,$ if $N(u) \cap N(v) = \emptyset,$ then $\kappa(u,v) \leq 0$
\end{theorem}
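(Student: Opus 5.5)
The plan is to reduce the statement to a lower bound on the transport cost. Since $(u,v)\in E_G$, we have $d(u,v)=1$, so
\[
\kappa(u,v) = 1 - W_1(\mu_u,\mu_v).
\]
It therefore suffices to show $W_1(\mu_u,\mu_v)\ge 1$ whenever $N(u)\cap N(v)=\emptyset$.

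The key observation is that the two random walk measures have disjoint supports. By definition, $\mu_u$ is supported exactly on $N(u)$ and $\mu_v$ exactly on $N(v)$, and the hypothesis says these sets do not intersect. Because $G$ is simple, $v\in N(u)$ and $u\in N(v)$, but neither vertex is its own neighbour, so disjointness is not affected by these two vertices. Take any coupling $\pi\in\Pi(\mu_u,\mu_v)$. Its first marginal is $\mu_u$ and its second is $\mu_v$, so $\pi(p,q)>0$ forces $p\in N(u)$ and $q\in N(v)$. Hence $p\neq q$ on the support of $\pi$, and so $d(p,q)\ge 1$ there.

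From this I will conclude
\[
\sum_{(p,q)\in V_G^2}\pi(p,q)\,d(p,q)\;\ge\;\sum_{(p,q)}\pi(p,q)\;=\;1
\]
for every coupling. Taking the infimum gives $W_1(\mu_u,\mu_v)\ge 1$, and therefore $\kappa(u,v)\le 0$. As a sanity check and an alternative route, Kantorovich duality with the $1$-Lipschitz test function $f(x)=d(x,N(v))$ gives the same bound. This $f$ vanishes on $N(v)$, equals $1$ at $v$, and is at least $1$ on the rest of $N(u)$, which lies outside $N(v)$. Hence $\int f\,d\mu_u-\int f\,d\mu_v\ge 1$.

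I expect the only delicate point to be bookkeeping rather than a real obstacle. One must confirm that the supports are exactly $N(u)$ and $N(v)$, with no laziness or self-mass at $u$ or $v$ as in some variants of Ollivier--Ricci curvature. One must also confirm that simplicity of $G$ rules out $u\in N(u)$. With a lazy walk the claim could fail, so the argument relies on the non-lazy measure fixed above. As a final check, the bound is sharp: a single edge $K_2$, or an edge from a leaf to a hub, gives $W_1=1$ and hence $\kappa=0$.
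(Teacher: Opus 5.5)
Your proof is correct and is essentially the paper's argument: $d(u,v)=1$ reduces the claim to $W_1(\mu_u,\mu_v)\ge 1$, and the marginal constraints put every coupling's mass on $N(u)\times N(v)$, where disjointness gives $d(p,q)\ge 1$. Your dual check with $f(x)=d(x,N(v))$, and your remark that the non-lazy walk measure is essential (a lazy walk gives $\kappa>0$ on $K_2$), are both valid additions that the paper does not include.
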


\begin{proof}
For any $\pi \in \Pi \left( \mu_u, \mu_v \right),$ consider the case of $\pi(p,q) > 0$ since $\pi(p,q) = 0$ contributes $0$ to $\sum_{(p, q) \in V_{G}^{2}} \pi(p,q) d(p,q).$ Then $p \in N(u)$ and $q \in N(v)$ since $\sum_{q \in V_G} \pi(p,q) = \mu_u(p)$ and $\sum_{p \in V_G} \pi(p,q) = \mu_v(q)$ give $p \notin N(u) \Rightarrow \sum_{q \in V_G} \pi(p,q) = 0$ and $q \notin N(v) \Rightarrow \sum_{q \in V_G} \pi(p,q) = 0,$ respectively. But $N(u) \cap N(v) = \emptyset,$ so $p \neq q$ for all $(p,q) \in V_{G}^2$ with $\pi (p,q) > 0.$ Thus, $d(p,q) \geq 1.$ It follows that
\begin{equation*}
\sum_{(p, q) \in V_{G}^{2}} \pi(p,q) d(p,q) \geq \sum_{(p, q) \in V_{G}^{2}} \pi(p,q) \cdot 1 = 1.
\end{equation*}
Moreover, $W_1 \left( \mu_u, \mu_v \right) \geq 1.$ Since $(u,v) \in E_G,$ we have $d(u,v) = 1.$ Hence,

$$ \kappa(u,v) = 1 - W_1(\mu_u, \mu_v) \leq 0.$$
\end{proof}

Thus, Theorem~\ref{thm:ricci-curvature} shows that the edges removed by the $CaEF$ in algorithm have non-positive Ollivier-Ricci curvature; in other words, they correspond with bottlenecks. This approach can then be said to mitigate oversquashing by removing the message passing pathways associated with this class of bottlenecks.

\begin{algorithm}[h!]
\SetAlgoVlined
\KwInput{A graph $G=(V,E)$}
\KwOutput{Vertex embeddings $Z_v$ and graph embedding $Z_G$}
\DontPrintSemicolon
\SetAlgoNoEnd

\tcc{Measure edge score with the $k$-core algorithm}
$C(u,v)=\max\{k \mid (u,v)\in G_k\}$\; \label{line_cacos:measure_edge_scores}

\tcc{Apply CaEF}
\If{$S(u,v)=0$}{
    $C(u,v)=C(u,v)-1$\; \label{line_cacos:apply_CaEF_and_update_score}
}

$S_k=\{(u,v)\in E : C(u,v)=k\}$\; \label{line_cacos:edge_induced_decom_starts}

\tcc{Extract subgraphs}
$S=\{\text{Edge-Subgraph}(S_k) : k\in 1,\dots,K_{\max}\}$\;

\ForEach{$S_k=(V_S,E_S)\in S$}{
    $H_k=\text{GCN}_k(S_k)$\;
    $Z_k=\text{SAGPool}(S_k,H_k)$\; \label{line_cacos:applied-gcn_and_pooling}
}

$Z_{S_k}^{attn}=\text{Attention}(Z_{S_k})$\; \tcc{Apply attention among subgraphs} \label{line_cacos:cross-attention}

$Z_G=\text{Mean}\left([Z_{S_1}^{attn}, Z_{S_2}^{attn}, \dots, Z_{S_k}^{attn}]\right)$\; \label{line_cacos:compute-graph_embedding}

$Z_v=\text{zeros}(N_v, d_v+d_S)$\; \label{line_cacos:node_embedding_start}

\ForEach{$(H_k,Z_k^{attn})$ associated with $S_k$}{
    \ForEach{$h_v\in H_k$}{
        $h_v=h_v \parallel Z_k^{attn}$\; \tcc{Concatenation operation}
        $z_v=z_v+h_v$\; \tcc{Map $v$ from $H_k$ to $Z_v$} \label{line_cacos:node_embedding_end}
    }
}

\Return $Z_G, Z_v$\;
\caption{CaCoS Algorithm}
\label{alg:cacos}
\end{algorithm}

\subsection{Representation Learning by $\cacose$}
Algorithm~\ref{alg:cacos} presents the execution of the $\cacose$ model for $k$-core decomposition. In lines (\ref{line_cacos:measure_edge_scores}-\ref{line_cacos:apply_CaEF_and_update_score}), at first, the input graph is decomposed with the $k$-core algorithm, where nodes achieve their coreness score. Next, $\cacose$ explores all the edges and assigns the coreness scores as the edge weight. Additionally, it utilizes edge filtering $(CaEF)$ to update specific narrow channels edge scores. In the next stage (lines \ref{line_cacos:edge_induced_decom_starts}-\ref{line_cacos:applied-gcn_and_pooling}), the $\cacose$ extracts edge-induced subgraphs from the input graph based on the coreness scores. Note that, the $k$-core algorithm~\cite{malliaros2020core} iteratively removes the nodes with degree less than $k$ as $k$ increases. Thus, in each node removal stage, the $(k-1)$ value is assigned to the removed nodes.

Then, on the partitioned subgraphs, it concurrently applies $GCN$ for node representation learning as well as $SAGPool$ to encode the entire subgraph's information. In the next step (line~\ref{line_cacos:cross-attention}), $\cacose$ utilizes the attention mechanism across the subgraphs' embeddings to capture the mutual information among them. Next, it takes the average of all the subgraph embeddings that represent the final representation of the graph (line~\ref{line_cacos:compute-graph_embedding}). In the context of node embeddings (lines \ref{line_cacos:node_embedding_start}-\ref{line_cacos:node_embedding_end}), each subgraph's cross-attentive embeddings are concatenated with its node representations, while the shared nodes are mapped and their combined features are summed up in the vertices' global feature matrix.

\subsubsection{Complexity.}
In worst case, the time complexities of $k$-core, SAGPool, and cross attention are $O(V + E)$, $O (V^{2})$, and $O(k^{2}d)$. Hence the overall complexity of $\cacose$ is $O(V^{2} + V+ E+ k^{2}d)$. Although looks complex, $\cacose$ benefits from parallelizable $k$-core decompositions and efficient GPU execution of SAGPool and cross-attention. The detailed algorithm is omitted due to space constraints.

\begin{figure*}[h!]
    \centering
    \includegraphics[width=\textwidth]{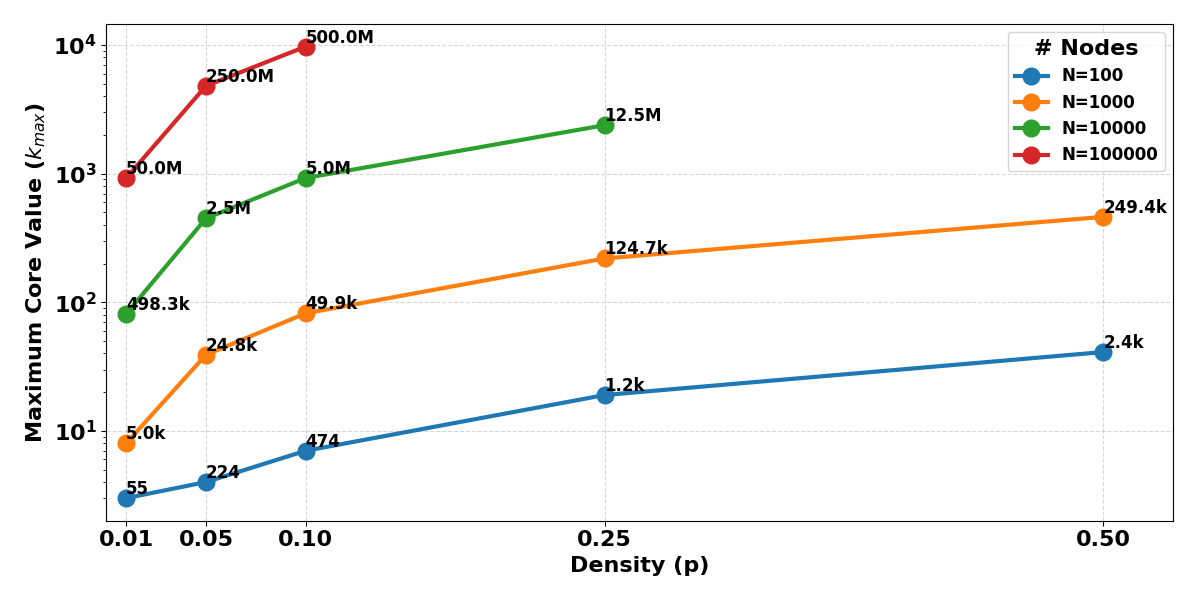}
    \caption{\textbf{Scalability Test.} This figure presents the maximum core value $K_{max}$ along the $y$-axis for networks generated with various combinations of graph size $(\# Nodes)$ and density $(p)$. The value of $K_{max}$ is corresponds to the maximum number of subgraphs that can be extracted through $k$-core decomposition.}
    \label{fig:scalability_test}
\end{figure*}

\vspace{-10mm}
\subsubsection{Scalability.}

In this experiment, we apply the Erd\H{o}s--R\'enyi graph generator to generate random networks of varying scale. During the generation process, we construct graphs with vertex count of $10^2$, $10^3$, $10^4$ and $10^5$. For each graph size ($\#Nodes$) we employ the edge creation probability $p \in \{0.01, 0.05, 0.10, 0.25, 0.50\}$. As the graph size increases, the number of possible edges-and consequently the total generated edges-grows significantly. 

For each generated graph, we compute the maximum core number $(K_{max})$m representing the highest number of subgraphs that can be extracted through the $k$-core decomposition. Figure~\ref{fig:scalability_test} plots $K_{max}$ against the graph's density $p$ ($x$-axis) for networks having different scales (distinguished by colors and markers). Besides, each plot is annotated with the number of edges corresponding to each combination of graph size and density. 

The result depicts that even in extremely large and densely connected networks, the maximum core value remains below $10^4$. In practical cases the value of $K_{max}$ is substantially smaller, this indicates that the number of extractable cohesive subgraphs via $k$-core decomposition is limited in realistic settings. 

Since, the real-world graphs are sparser and the resulting subgraphs sizes are smaller, it implies that the computational overhead introduced by the attention mechanism in our framework is negligible. It is worth noting that, the generation process was constrained by hardware capacity. For graphs with $10^4$ vertices we limit the density $p \le 0.25$, and for graph size with $10^5$, we consider $p\le 0.10$. 
\end{document}